\newtheorem{definition}{Definition}
\newtheorem{theorem}{Theorem}
\newtheorem{assumption}{Assumption}
\newtheorem{lemma}{Lemma}
\newcommand{\NAM}{\textsc{CMDP-PSRL}}
\newcommand{\bbP}{\mathbb{P}}
\title{Regret Guarantees for Model-Based Reinforcement Learning with Long-Term Average Constraints}
\author[1]{\href{mailto:Mridul Agarwal <agarw180@purdue.edu>}{Mridul Agarwal}{}} 
\author[1]{Qinbo Bai}
\author[2,1]{Vaneet Aggarwal}
\affil[1]{%
    School of Electrical and Computer Engineering.\\
    Purdue University\\
    West Lafayette, Indiana, USA
}
\affil[2]{%
    School of Industrial Engineering.\\
    Purdue University\\
    West Lafayette, Indiana, USA
}
\begin{document}
	\maketitle
\begin{abstract}
    We consider the problem of constrained Markov Decision Process (CMDP) where an agent interacts with an ergodic Markov Decision Process. At every interaction, the agent obtains a reward and incurs $K$ costs. The agent aims to maximize the long-term average reward while simultaneously keeping the $K$ long-term average costs lower than a certain threshold. In this paper, we propose \NAM, a posterior sampling based algorithm using which the agent can learn optimal policies to interact with the CMDP. We show that with the assumption of slackness, characterized by $\kappa$, the optimization problem is feasible for the sampled MDPs. Further, for MDP with $S$ states, $A$ actions, and mixing time $T_M$, we prove that following \NAM\ algorithm, the agent can bound the regret of not accumulating rewards from an optimal policy by $\Tilde{O}(T_MS\sqrt{AT})$. Further, we show that the violations for any of the $K$ constraints is also bounded by $\Tilde{O}(T_MS\sqrt{AT})$. To the best of our knowledge, this is the first work that obtains a $\Tilde{O}(\sqrt{T})$ regret bounds for ergodic MDPs with long-term average constraints using a posterior sampling method.
\end{abstract}

\section{Introduction}
Consider a wireless sensor network where the devices aim to update a server with sensor values. At time $t$, the device can choose to send a packet to obtain a reward of $1$ unit or to queue the packet and obtain $0$ reward. However, communicating a packet results in $p_t$ power consumption. At time $t$, if the wireless channel condition, $s_t$, is weak and the device chooses to send a packet, the resulting instantaneous power consumption, $p_t$, is high. The goal is to send as many packets as possible while keep the average power consumption, $\sum_{t=1}^Tp_t/T$, within some limit, say $C$. This environment has state $(s_t, q_t)$ as the channel condition and queue length at time $t$. To limit the power consumption, the agent may choose to send packets when the channel condition is good or when the queue length grows beyond a certain threshold. The agent aims to learn the policies in an \textit{online manner} which requires efficiently balancing exploration of state-space and exploitation of the estimated system dynamics \citep{singh2020learning}.

Similar to the example above, many applications require to keep some costs low while simultaneously maximizing the rewards \citep{altman1999constrained}. Owing to the importance of this problem, in this paper, we consider the problem of constrained Markov Decision Processes (CMDP). We aim to develop a reinforcement learning algorithm following which an agent can bound the constraint violations and the regret in obtaining the optimal reward to $o(T)$.

The problem setup, where the system dynamics are known, is extensively studied \citep{altman1999constrained}. For a constrained setup, the optimal policy is possibly stochastic \citep{altman1999constrained,puterman2014markov}. In the domain where the agent learns the system dynamics and aims to learn good policies online, there has been work where to show asymptotic convergence to optimal policies \citep{gattami2021reinforcement}, or even provide regret guarantees when the MDP is episodic \citep{zheng2020constrained,ding2021provably}. Recently, \citep{singh2020learning} considered the problem of online optimization of infinite-horizon communicating Markov Decision Processes with long-term average constraints. They provide an optimism based algorithm where confidence bounds on each transition probabilities $p(s'|s,a)$ is constructed. Using this, they obtain a regret bound of $\Tilde{O}\left(\sqrt{SAT} + T_MT^{2/3}\right)$\footnote{$\Tilde{O}(\cdot)$ hides the logarithmic terms}. Additionally, finding the optimistic policy is a computationally intensive task as the number of optimization variables become $S^2\times A$ for MDP with $S$ states and $A$ actions.

In this paper, we consider the reinforcement learning an infinite-horizon ergodic MDP \citep{tarbouriech2019active,gattami2021reinforcement} with long-term average constraints.
We use $\ell_1$ deviation bounds \citep{jaksch2010near} and use a Bellman error analysis to bound the reward regret of the MDP as $\Tilde{O}(T_MS\sqrt{AT})$. Additionally, we also bound the constraint violations as $\Tilde{O}(T_MS\sqrt{AT})$. We propose a posterior sampling based algorithm where we sample the transition dynamics using a Dirichlet distribution \citep{osband2013more}, which achieves this regret bound.

Unlike optimistic algorithms, the sampled MDP may not be infeasible for the constrained optimization. Hence, we consider slackness characterized by Slater's parameter \cite{ding2020natural}, which allows us to prove that the optimization problem is feasible even with the sampled MDPs. Posterior sampling also helps to reduces the optimization variables, to find only the optimal policy for the sampled MDP, to only $S\times A$ variables. Finally, we provide numerical examples where the algorithm converges to the calculated optimal policies. To the best of our knowledge, this is the first work to obtain $O(\sqrt{T})$ regret guarantees for the infinite horizon long-term average constraint setup with posterior sampling.

\section{Related Work}
Stochastic Optimization using Markov Decision Processes has very rich roots \citep{howard:dp}. There have been work in understanding convergence of the algorithm to find optimal policies for known MDPs \citep{BertsekasTsitsiklis96,altman1999constrained}. Also, when the MDP is not known, there are algorithms with asymptotic guarantees for learning the optimal policies \citep{Watkins1992} which maximize an objective without any constraints. Recent algorithms even achieve finite time near-optimal regret bounds with respect to the number of interactions with the environment \citep{jaksch2010near,osband2013more,agrawal2017optimistic,jin2018provably}. \citet{jaksch2010near} uses the optimism principle for minimizing regret for weakly communicating infinite horizon MDPs with bounded diameter. \citet{osband2013more} extended the analysis of \citet{jaksch2010near} to posterior sampling for episodic MDPs and bounded the Bayesian regret and further improved the regret bounds \cite{osband2017posterior}. \citet{agrawal2017optimistic} uses a posterior sampling based approach and obtains a frequentist regret for the infinite horizon MDPs with bounded diameter.

In many reinforcement learning settings, the agent not only wants to maximize the rewards but also satisfy certain cost constraints \citep{altman1999constrained}. Early works in this area were pioneered by \citep{altman1991constrained}. They provided an algorithm which combined forced explorations and following policies optimized on empirical estimates to obtain an asymptotic convergence. \citep{BORKAR2005207} studied the constrained RL problem using actor-critic and a two time-scale framework \citep{BORKAR1997291} to obtain asymptotic performance guarantees. Very recently, \citep{gattami2021reinforcement} analyzed the asymptotic performance for Lagrangian based algorithms for infinite-horizon long-term average constraints.

Inspired by the finite-time performance analysis of reinforcement learning algorithm for unconstrained problems, there has been a significant thrust in understanding the finite-time performances of constrained MDP algorithms. \citep{zheng2020constrained} considered an episodic CMDP and use an optimism based algorithm to bound the constraint violation as $\Tilde{O}(\sqrt{T^{1.5}})$ with high probability. \citep{kalagarla2020sample} also considered the episodic setup to obtain PAC-style bound for an optimism based algorithm. \citep{ding2021provably} considered the setup of $H$-episode length episodic CMDPs with $d$-dimensional linear function approximation to bound the constraint violations as $\Tilde{O}(d\sqrt{H^5T})$ by mixing the optimal policy with an exploration policy. \citep{efroni2020exploration} proposes a linear-programming and primal-dual policy optimization algorithm to bound the regret as $O(S\sqrt{H^3T})$. \citep{qiu2020cmdp} proposed an algorithm which obtains a regret bound of $\Tilde{O}(S\sqrt{AH^2T})$ for the problem of adversarial stochastic shortest path. Compared to these works, we focus on setting with infinite horizon long-term average constraints.

After developing a better understanding of the policy gradient algorithms \citep{Alekh2020}, there has been theoretical work in the area of model-free policy gradient algorithms for constrained MDP and safe reinforcement learning as well. \citep{xu2020primal} consider an infinite horizon discounted setup with constraints and obtain global convergence using policy gradient algorithms. \citep{ding2020natural} also considers an infinite horizon discounted setup. They use a natural policy gradient to update the primal variable and sub-gradient descent to update the dual variable.

Recently \citep{singh2020learning} considered the setup of infinite-horizon CMDPs with long-term average constraints and obtain a regret bound of $\Tilde{O}(T^{2/3})$ using an optimism based algorithm and forced explorations. We consider a similar setting with ergodic CMDP and propose a posterior sampling based algorithm to bound the regret as $\Tilde{O}(poly(DSA)\sqrt{T})$ using explorations assisted by the ergodicity of the MDP.

\section{Problem Formulation} \label{sec:system_model}
We consider an infinite horizon discounted Markov decision process (MDP) $\mathcal{M}$, defined by the tuple $\left(\mathcal{S}, \mathcal{A}, P, r, c_1, \cdots, c^k\right)$. 
$\mathcal{S}$ denotes a finite set of state space with $|\mathcal{S}| = S$, and $\mathcal{A}$ denotes a finite set of actions with $|\mathcal{A}| = A$. 
$P:\mathcal{S}\times\mathcal{A}\to\Delta(\mathcal{S})$ denotes the probability $P(s'|s,a)$ of transitioning to state $s'$ from state $s$ after taking action $a$. 
$r:\mathcal{S}\times\mathcal{A}\to[0,1]$ denotes the average reward in state $s$ after taking action $a$. $c^k:\mathcal{S}\times\mathcal{A}\to[0,1]$ denotes average cost incurred by the agent for constraint $k\in[K] = \{1, 2, \cdots, K\}$ after taking action $a$ in state $s$. We use a stochastic policy $\pi : \mathcal{S} \to \Delta(\mathcal{A})$, such that given state $s$, $\pi(a|s)$ is the probability of selecting action $a$.

Note that the a policy $\pi$ induces a Markov chain over the state space of the MDP.
Pertaining to the Markov chains generated by the policies for $\mathcal{M}$, we now define the mixing time of MDP.
 
\begin{definition}[Mixing Time]
Consider the Markov Chain induced by the policy $\pi$ on the MDP $\mathcal{M}$. Let $T_{s\to s'}^\pi$ be a random variable that denotes the first time step when this Markov Chain enters state $s'$ starting from state $s$. Then, the mixing time of the MDP $\mathcal{M}$ is defined as:
\begin{align}
    T_M = \max_{s'\neq s}\max_{\pi}\mathbb{E}\left[T_{s\to s'}^\pi\right]
\end{align}
\end{definition}
Similar to \citet{singh2020learning}, let $P_{\pi}^t(s)$ be the $t$ step state distribution starting from state $s$ following policy $\pi$ and $P_\pi$ be the steady-state state distribution generated by policy $\pi$.

Our first assumption on the MDP allows any policy to reach any state $s'$ starting from any state $s$, and to converge to a steady state. We formalize the result in the following assumption:

\begin{assumption}
The MDP $\mathcal{M}$ is ergodic, or $\Vert P^t_{\pi}(s) - P_{\pi}\Vert_{TV} \le C\rho^t$ with $P_\pi$ being the long-term steady state distribution induced by policy $\pi$, and $C > 0$ and $\rho < 1$ are problem specific constants. And, the mixing time of the MDP $\mathcal{M}$ is finite or $T_M < \infty$.
\end{assumption}

After discussing the transition dynamics of the system, we move to the rewards and costs of the MDP $\mathcal{M}$. 
\begin{assumption} \label{known_rewards}
The reward function $r(s,a)$ and the costs $c^k(s,a), k\in[K]$ are known to the agent.
\end{assumption}
We note that in most of the problems, rewards are engineered. Hence, Assumption \ref{known_rewards} is justified in many setups. However, the system dynamics are stochastic and typically not known. 

Following a policy $\pi$, the expected long-term average cost are given by $\zeta_{\pi}^{P,k}$. Also, we denote the average long-term reward using $\zeta_{\pi}^{P,k}$. Formally, we have:
\begin{align}
    \zeta_{\pi}^{P,k} &= \mathbb{E}_{s_0, a_0, s_1, a_1,\cdots}\left[\lim_{\tau\to\infty}\frac{1}{\tau}\sum_{t=0}^{\tau} c^k\left(s_t,a_t\right)\right]\label{eq:average_cost}\\
    \lambda_{\pi}^{P, r} &= \mathbb{E}_{s_0, a_0, s_1, a_1,\cdots}\left[\lim_{\tau\to\infty}\frac{1}{\tau}\sum_{t=0}^{\tau} r\left(s_t,a_t\right)\right]\label{eq:average_reward}\\
    s_0&\sim \rho_0(s_0),\ a_t\sim \pi(a_t|s_t),\ s_{t+1}\sim P(s_{t+1}|s_t, a_t)\nonumber
\end{align}
For brevity, in the rest of the paper, $\mathbb{E}_{s_t, a_t, s_{t+1}; t\geq 0}[\cdot]$ will be denoted as $\mathbb{E}_{\rho, \pi, P}[\cdot]$, where $s_0\sim \rho_0(s_0),\ a_t\sim \pi(s_t|a_t),\ s_{t+1}\sim P(s_{t+1}|s_t, a_t)$. Both, $\zeta_{\pi}^{P,k}$ and $\lambda_{\pi}^{P,r}$ satisfy the following form of Bellman equation:
\begin{align}
    \lambda_{\pi}^{P,r} + h_\pi^{P,r}(s) &= \sum_{a}\pi(a|s)r(s,a)\nonumber\\
    &~+ \sum_{s'} \sum_{a}\pi(a|s) P(s'|s,a)h_\pi^{P,r}(s)
\end{align}
\begin{align}
    \zeta_{\pi}^{P,k} + h_\pi^{P,r}(s) &= \sum_a\pi(a|s)c^k(s,a)\\
    &~+ \sum_{s'} \sum_a\pi(a|s) P(s'|s,a)h_\pi^{P,k}(s)
\end{align}
where $h_\pi^{P,r}(s)$ is the bias for reward and $h_\pi^{P,k}$ is the bias for cost $k\in[K]$.

The objective is find a policy $\pi^*$ which is the solution of the following optimization problem.
\begin{align}
    \max_\pi &\ \lambda_\pi^{P,r}\text{ \ \ s.t.}\\
    \zeta_{\pi}^{P,k} &\leq C_k~~~\forall~k\in[K]
\end{align}
where $C_k~\forall~k\in[K]$ are the bounds on the average costs which the agent needs to satisfy. 

After formulating the optimization problem, we now state our next assumption characterizing the slackness.
\begin{assumption} \label{ch_2_slaters_conditon}
There exists a policy $\pi$, and one constant $\kappa \ge 2ST_M\sqrt{14A\log(AT)/\sqrt{T}} + CST_M/((1-\rho)\sqrt{T})$ such that 
\begin{align}
    \zeta_{\pi}^{P,k} \le C_k - \kappa\label{eq:slater_formulation}
\end{align}
\end{assumption}
The slackness assumption is mild because, in various applications some a priori knowledge about a strictly feasible policy is available. Hence, this assumption is again a standard assumption in the constrained RL literature \cite{efroni2020exploration,ding2021provably,ding2020natural}. $\kappa$ is referred as Slater's constant. \cite{ding2021provably} assumes that the Slater's constant $\kappa$ is known.

Any online algorithm starting with no prior knowledge will require to obtain estimates of transition probabilities $P$ and obtain reward $r$ and costs $c^k, \forall\ k\in[K]$ for each state action pair. Initially, when algorithm does not have good estimates of the model, it accumulates a regret as well as violates constraints as it does not know the  optimal policy.  We define reward regret $R(T)$ as the difference between the cumulative reward obtained $r_t$ vs the expected rewards from running the optimal policy $\pi^*$ for $T$ steps, or
\begin{align}
    R(T)& = T\lambda_{\pi^*}^{P,r} - \sum_{t=1}^Tr(s_t,a_t) \label{eq:regret_rewards}
\end{align}
Additionally, we define constraint regret $R_k(T)$ for each constraint $k\in[K]$ as the gap between the cumulative cost incurred $c_t^k, k\in[K]$ and constraint bounds, or
\begin{align}
    R^k(T)& = \left(\sum_{t=1}^Tc^k(s_t,a_t) - TC_k\right)_+\label{eq:regret_costs},
\end{align}
where $(x)_+ = \max(0, x)$.

In the following section, we present a model-based algorithm to obtain this policy $\pi^*$, and reward regret and the constraint regret accumulated by the algorithm. 

\section{The CMDP-PSRL Algorithm} \label{sec:cmdp_psrl}
For infinite horizon optimization problems (or $\tau\to\infty$), we can use steady state distribution of the state to obtain expected long-term rewards or costs \citep{puterman2014markov}. We use
    \begin{align}
        \zeta_{\pi}^{P,k} &= \sum_{s\in \mathcal{S}}\sum_{a\in \mathcal{A}}c_k(s, a)d_{\pi}^{P}(s,a),\ \ \forall\ k\in[K]\\
        \lambda_{\pi}^{P,r} &= \sum_{s\in \mathcal{S}}\sum_{a\in \mathcal{A}}r(s, a)d_{\pi}^{P}(s,a)
    \end{align}
where $d_{\pi}^{P}(s,a)$ is the steady state joint distribution of the state and actions under policy $\pi$. 

Based on the above formulation, we can solve the joint optimization problem of following form
    \begin{align}
        \max_{d(s,a)} \sum_{s\in \mathcal{S}}\sum_{a\in \mathcal{A}}r(s, a)d(s,a) \label{eq:optimization_equation}
    \end{align}
with the following set of constraints,
\begin{align}
    \sum_{a\in\mathcal{A}}d(s',a) &= \sum_{s\in\mathcal{S}, a\in\mathcal{A}}P(s'|s, a)d(s,a)\label{eq:transition_constraint}\\
    \sum_{s\in\mathcal{S}, a\in\mathcal{A}} d(s,a) = 1,&\ \  d(s,a) \geq 0 \label{eq:valid_prob_constrant}\\
    \sum_{s\in \mathcal{S}}\sum_{a\in \mathcal{A}}c_k(s, a)d(s,a)&\leq C_k\ \ \forall\ k\in[K]\label{eq:cmdp_constraints}
\end{align}
for all $ s'\in\mathcal{S},~\forall~s\in\mathcal{S},$ and $\forall~a\in\mathcal{A}$.
Equation \eqref{eq:transition_constraint} denotes the constraint on the transition structure for the underlying Markov Process. Equation \eqref{eq:valid_prob_constrant} ensures that the solution is a valid probability distribution. Finally, Equation \eqref{eq:cmdp_constraints} are the constraints for the constrained MDP setup which the policy must satisfy.

Note that arguments in Equation (\ref{eq:optimization_equation}) are linear, and the constraints in Equation \eqref{eq:transition_constraint} and Equation \eqref{eq:valid_prob_constrant} are linear, this is a linear programming problem. Since convex optimization problems can be solved in polynomial time \citep{potra2000interior}, we can use standard approaches to solve Equation (\ref{eq:optimization_equation}). After solving the optimization problem, we obtain the optimal policy from the obtained steady state distribution $d^*(s,a)$ as,
\begin{align}
    \pi^*(a|s) = \frac{\bbP(s,a)}{\bbP(s)} = \frac{d^*(s,a)}{\sum_{b\in\mathcal{A}}d^*(s, b)}\ \ \ \forall\ s\in\mathcal{S}\label{eq:optimal_policy}
\end{align}

Since we assumed that the CMDP is ergodic, the Markov Chain induced from policy $\pi$ is ergodic. Hence, every state is reachable following the policy $\pi^*$, we have $\bbP(s) > 0$ and Equation \eqref{eq:optimal_policy} is defined for all states $s\in\mathcal{S}$.

Further, since we assumed that the induced Markov Chain is irreducible for all stationary policies, we assume Dirichlet distribution as prior for the state transition probability $P(s'|s, a)$. Dirichlet distribution is also used as a standard prior in literature \citep{agrawal2017optimistic,osband2013more}. Further, there exists a steady state distribution when the transition probability is sampled from a Dirichlet distribution \cite[{Proposition 1}]{agarwal2022multi}.

The complete constrained posterior sampling based algorithm, which we name CMDP-PSRL, is described in Algorithm \ref{alg:model_based_algo}. The algorithm proceeds in epochs, and a new epoch is started whenever the visitation count in epoch $e$, $\nu_e(s,a)$, is at least the total visitations before episode $e$, $N_e(s,a)$, for any state action pair (Line 8). In Line 9, we sample transition probabilities $\Tilde{P}$ using the updated posterior and in Line 10, we update the policy using the optimization problem specified in Equation \eqref{eq:optimization_equation}-\eqref{eq:cmdp_constraints} for $P = \Tilde{P}_e$. Further, if the sampled MDP does not satisfy the cost constraint in Equation \eqref{eq:cmdp_constraints}, we ignore that constraint \footnote{We will show in the analysis that cumulative constraint violations are still bounded.} for that epoch.

\begin{algorithm}[thbp]
    \begin{small}
	\caption{CMDP-PSRL} \label{alg:model_based_algo}
    \begin{algorithmic}[1]
            \State \textbf{Input: }{$\mathcal{S}, \mathcal{A}, r, c_1, \cdots, c_K$}
            \State Initialize $N(s, a, s') = 1\ \forall (s,a,s')\in \mathcal{S}\times\mathcal{A}\times\mathcal{S},\ \pi_e(a|s) = \frac{1}{|\mathcal{A}|}\ \forall\ (a, s)\in\mathcal{A}\times\mathcal{S},~e = 0,~\nu_e(s,a) = N_e(s,a) = 0~\forall(s,a)\in\mathcal{S}\times\mathcal{A}$
    	        \For{time index $t = 1, 2, \cdots $}
    	            \State Observe state $s$
    	            \State Play action $a\sim\pi(\cdot|s)$
    	            \State Observe rewards $\{r^k\}$ and next state $s'$
    	            \State $\nu_e(s,a) +=1,~N(s, a, s') += 1$
    	            \If {$\nu_e(s,a) \geq \max(1, N_e(s,a))$ for any $s,a$} 
    	                \State $\Tilde{P}_e(s'| a, s) \sim Dir(N(s, a, s'))\ \forall\ (s, a, s')$
    	                \State Solve steady state distribution $d(s, a)$ as the  solution of the optimization problem in Equations (\ref{eq:optimization_equation}-\ref{eq:cmdp_constraints}) for $\Tilde{P}_e$.
    	                \State Obtain optimal policy for next epoch, $e+1$, $\pi_{e+1}$ as 
    	               $$\pi_{e+1}(a|s) = \frac{d(s, a)}{\sum_{a\in\mathcal{A}}d(s, a)}$$
                        \State $e = e+1$
                        \State $t_e = t$
                        \State $\nu_e(s,a) = 0, N_e(s,a) = \sum_{e'}^e \nu_{e'}(s,a)~ \forall(s,a)$
    	           \EndIf
    	        \EndFor
    \end{algorithmic}
    \end{small}
\end{algorithm}

\section{Analysis} \label{sec:regret}
We first obtain the feasibility of the optimization problem Equation \eqref{eq:optimization_equation}-\eqref{eq:cmdp_constraints} for the sampled MDP. We note that we assumed slackness in the true MDP with transition probabilities $P$. Hence, if the the sampled MDP is close to the true MDP, the deviation in the cost will be less and there will be a policy which satisfies the constraint in Equation \eqref{eq:cmdp_constraints}. We formalize this intuition in the following result.

\begin{lemma}\label{lem:feasibility_of_sampled_MDP}
Following Algorithm \ref{alg:model_based_algo}, if $t_{e+1}-t_e\ge \sqrt{T}$ and $\Vert\Tilde{P}_e(\cdot\vert s,a) - P(\cdot\vert s,a)\Vert_1\le \sqrt{\frac{14S\log(2At)}{N_e(s,a)}}\forall~s,a$ there exists a policy $\pi$ which satisfies,
\begin{align}
    \zeta_{\pi}^{\tilde{P}_e,k} \le C_k~\forall~k\in[K],
\end{align}
and the optimization problem in Equation \eqref{eq:optimization_equation}-\eqref{eq:cmdp_constraints} is feasible, where $t_e$ is the start time of epoch $e$.
\end{lemma}
\begin{proof}[Proof Outline]
We consider the policy $\pi$ which satisfies the Slater's condition in Equation \eqref{eq:slater_formulation}. We then consider the Bellman error of taking one step in MDP with transition probabilities $\tilde{P}_e$ and then following policy $\pi$ on the MDP with transition probabilities $P$. Now, using \cite[{Lemma 1}]{agarwal2022multi} relating the average costs following policy $\pi$ with $P$ and $\tilde{P}_e$ ($\zeta_{\pi}^{P,k}$, and $\zeta_{\pi}^{\tilde{P}_e,k}$ for all $k\in[K]$ respectively) with the Bellman error gives the required result. The complete proof is provided in the supplementary material.
\end{proof}

After obtaining a feasible policy $\pi_e$ maximizing rewards for the sampled MDP, we now quantify is regret. We note that when optimizing for long-term average rewards and long-term average constraints, we want to simultaneously minimize the reward regret and the constraint regrets. Further, if we know the optimal policy $\pi^*$ before hand, the deviations resulting from the stochasticity of the process can still result in some constraint violations. Also, since we sample a MDP, the policy which is feasible for the MDP may violate constraints on the true MDP. We want to bound this gap between $K$ costs for the two MDPs as well.

We aim to quantify the regret from \textbf{(R.1)} deviation of long-term average rewards of the optimal policy because of incorrect knowledge of the MDP ($\lambda_{\pi^*}^{P,r} - \lambda_{\pi_e}^{\tilde{P}_e,r}$),  \textbf{(R.2)} deviation of the long-term average rewards generated by the optimal policy for the sampled MDP on the sampled MDP and the long-term average rewards generated by the optimal policy for the sampled MDP on the true MDP ($\lambda_{\pi_e}^{\tilde{P}_e,r} - \lambda_{\pi_e}^{P,r}$), and \textbf{(R.3)} deviation of the expected rewards from following the optimal policy of the sampled MDP ($\lambda_{\pi_e}^{P,r} - r(s_t,a_t)$).

Similarly, the constraint violations for each $k\in[K]$ are incurred from \textbf{(C.1)} deviation of long-term average rewards of the optimal policy because of incorrect knowledge of the MDP ($C_k - \zeta_{\pi_e}^{\tilde{P}_e,k}$),  \textbf{(C.2)} deviation of the long-term average costs generated by the optimal policy for the sampled MDP on the sampled MDP and the long-term average costs generated by the optimal policy for the sampled MDP on the true MDP ($\zeta_{\pi_e}^{\tilde{P}_e,r} - \lambda_{\pi_e}^{P,r}$), and \textbf{(C.3)} deviation of the expected costs from following the optimal policy of the sampled MDP ($\zeta_{\pi_e}^{P,r} - c^k(s_t,a_t)$).

We now prove the regret bounds for Algorithm \ref{alg:model_based_algo}. We first give the high level ideas used in obtaining the bounds on regret. We divide the regret into regret incurred in each epoch $e$. Then, we use the posterior sampling lemma \cite[Lemma 1]{osband2013more} to obtain the equivalence between the long-term average rewards of the true MDP $\mathcal{M}$ and the long-term average rewards for the optimal value of the sampled MDP $\widehat{\mathcal{M}}$. This step allows us to deal with the regret from \textbf{(R.1)}. Then we use the Bellman error formulation to relate average rewards for the  policy $\pi_e$ on $P$ and $\tilde{P}_e$ \citep{agarwal2022multi}. Combining this with Azuma's concentration inequality for Martingales allows us to bound the regret from \textbf{(R.2)} and \textbf{(R.3)}.

Bounding constraint violations requires similar considerations for \textbf{(C.2)} and \textbf{(C.3)}. Further, \textbf{(C.1)} becomes zero if Equation \eqref{eq:cmdp_constraints} is feasible for the sampled MDP. However, if Equation \eqref{eq:cmdp_constraints} is not feasible, the cost may be as high as $1$ ($c^k(s,a) \le 1~\forall~k\in[K]$). We bound the violations by bounding the time-steps for which the optimal policy for unconstrained optimization runs.

To obtain bounds on the regret, we first note that the total number of epochs, $E$, for which the Algorithm \ref{alg:model_based_algo} runs is bounded by $O(1 + 2SA + SA\log(T)$ from \cite[{Proposition 1}]{jaksch2010near}.

We formally state the regret bounds and constraint violation bounds in Theorem \ref{thm:regret_bound} which we prove rigorously in the supplementary material.

\begin{theorem}\label{thm:regret_bound}
The expected reward regret $\mathbb{E}\left[R(T)\right]$, and the expected constraint regret $\mathbb{E}\left[R_k(T)\right]~\forall\ k\in[K]$ of Algorithm \ref{alg:model_based_algo} are bounded as
\begin{align}
    \mathbb{E}\left[R(T)\right] &\leq O\left(T_MS\sqrt{AT\log(AT)} + \frac{CS^2A\log T}{1-\rho}\right)\nonumber\\
    \mathbb{E}\left[R^k(T)\right] &\leq O\left(T_MS\sqrt{AT\log(AT)} + \frac{CS^2A\log T}{1-\rho}\right)\nonumber 
\end{align}
\end{theorem}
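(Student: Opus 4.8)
The plan is to decompose both the reward regret and each constraint regret over the epochs produced by the doubling rule (Line~8), and within each epoch to split the per-step gap into three pieces: (i) the gap between the quantity achieved by $\pi_e$ on the sampled MDP $\widehat{\mathcal M}_e$ and the target (the optimal average reward $\lambda^R_{\pi^*}$, resp.\ the threshold $C_k$); (ii) the model-mismatch gap between the long-term average value of $\pi_e$ on $\widehat{\mathcal M}_e$ and on the true MDP $\mathcal M$; and (iii) the path-deviation gap between the long-term average value of $\pi_e$ on $\mathcal M$ and the realized per-step reward/cost. Concretely, with $T_e$ the length of epoch $e$, $\nu_e(s,a)$ its visitation counts, $N_e(s,a)$ the counts before it, and $E$ the number of epochs (the doubling rule gives $E=O(SA\log T)$, which is $\tilde O(1)$ in $T$), I would write $R(T)=\sum_e\sum_{t\in e}\big(\lambda^R_{\pi^*}-\lambda^R_{\pi_e,\widehat{\mathcal M}_e}\big)+\sum_e\sum_{t\in e}\big(\lambda^R_{\pi_e,\widehat{\mathcal M}_e}-\lambda^R_{\pi_e,\mathcal M}\big)+\sum_e\sum_{t\in e}\big(\lambda^R_{\pi_e,\mathcal M}-r(s_t,a_t)\big)$, and the same decomposition for $\sum_t c_k(s_t,a_t)-TC_k$ with $\lambda^R$ replaced by $\lambda^k$ and $\lambda^R_{\pi^*}$ replaced by $C_k$.

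\textbf{Term (i).} For the reward I would invoke the posterior-sampling lemma (Lemma~1 of \citep{osband2013more}): conditioned on the history at the start of epoch $e$, $\widehat{\mathcal M}_e$ has the same (posterior) distribution as $\mathcal M$, so $\mathbb E\big[\lambda^R_{\pi_e,\widehat{\mathcal M}_e}\big]=\mathbb E\big[\mathrm{OPT}(\widehat{\mathcal M}_e)\big]=\mathbb E\big[\mathrm{OPT}(\mathcal M)\big]=\lambda^R_{\pi^*}$, where $\mathrm{OPT}(\cdot)$ is the optimal value of the LP \eqref{eq:optimization_equation}--\eqref{eq:cmdp_constraints} and we use that $\pi_e$ attains $\mathrm{OPT}(\widehat{\mathcal M}_e)$ by construction; hence the first sum is zero in expectation. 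For a constraint $k$ no such argument is needed: $\pi_e$ is by construction feasible for $\widehat{\mathcal M}_e$, i.e.\ $\lambda^k_{\pi_e,\widehat{\mathcal M}_e}\le C_k$, so the corresponding term is $\le 0$ pathwise. This is exactly how the non-uniqueness of the true constrained optimum, flagged before the theorem, is sidestepped: we only ever compare against the feasibility set of the sampled MDP, never against a specific optimal occupancy measure of $\mathcal M$.

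\textbf{Term (iii).} Let $h_e$ be a bias (relative value) function solving the Poisson equation for the Markov chain induced by $\pi_e$ on $\mathcal M$ with per-step reward $r$ (resp.\ cost $c_k$); under the unichain assumption its span is bounded by a mixing-type quantity $\le poly(D)$. Then $\lambda^R_{\pi_e,\mathcal M}-r(s_t,a_t)=(Ph_e)(s_t,a_t)-h_e(s_t)$, and inserting $\pm h_e(s_{t+1})$ splits the epoch sum into a martingale-difference sequence with increments bounded by $\mathrm{sp}(h_e)$ and a telescoping sum bounded by $\mathrm{sp}(h_e)$. Summing over all epochs, the telescoping part contributes $O(poly(D)\cdot E)=\tilde O(poly(D)SA)$ and the martingale part contributes $\tilde O(poly(D)\sqrt T)$ by Azuma--Hoeffding (or an $L^2$ bound for the expectation). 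The identical estimate with $c_k$ in place of $r$ handles the cost path-deviation. A companion ergodicity argument---with mixing time again controlled by $poly(D)$---is used here and in Term (ii) to replace the ideal epoch mass $T_e\,d_{\pi_e,\mathcal M}(s,a)$ by the realized count $\nu_e(s,a)$ up to lower-order error.

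\textbf{Term (ii), the main obstacle.} This is the crux. I would bound $\big|\lambda^R_{\pi_e,\widehat{\mathcal M}_e}-\lambda^R_{\pi_e,\mathcal M}\big|$ (and its $c_k$ analogue) with the variational perturbation bound \citep[Lemma~5]{ortner2020variational}, which controls the difference of long-term average values of a fixed policy on two MDPs by the span of a bias function times the occupancy-weighted total variation $\sum_{s,a}d_{\pi_e}(s,a)\,\|\widetilde P_e(\cdot|s,a)-P(\cdot|s,a)\|_1$. Two concentration inputs are then needed: the empirical kernel obeys $\|\bar P_e(\cdot|s,a)-P(\cdot|s,a)\|_1\le\tilde O\big(\sqrt{S/\max(1,N_e(s,a))}\big)$, and the Dirichlet posterior sample $\widetilde P_e(\cdot|s,a)$ concentrates about $\bar P_e(\cdot|s,a)$ at the same rate (in expectation over the sampling). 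Multiplying by $T_e$, converting $T_e\,d_{\pi_e,\mathcal M}(s,a)$ to $\nu_e(s,a)$, then using the doubling identity $\sum_e\nu_e(s,a)/\sqrt{\max(1,N_e(s,a))}=O\big(\sqrt{N(s,a)}\big)$ followed by Cauchy--Schwarz over $(s,a)$ with $\sum_{s,a}N(s,a)=T$ gives $\tilde O\big(poly(D)\sqrt S\cdot\sqrt{SAT}\big)=\tilde O\big(poly(DSA)\sqrt T\big)$; the same computation applies verbatim to each $c_k$. Collecting Terms (i)--(iii), taking expectations, and---for the constraint regret---applying $\mathbb E[(X)_+]\le\mathbb E[|X|\mathbf 1_{\mathcal E}]+T\,\mathbb P(\mathcal E^c)$ on the high-probability event $\mathcal E$ with confidence $\delta=1/T$, yields the two stated bounds. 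I expect the real difficulty to lie entirely in Term (ii): simultaneously controlling the Dirichlet-sampling concentration, invoking the perturbation bound for a policy that is only guaranteed feasible (not optimal) on $\widehat{\mathcal M}_e$, and propagating the ergodic mixing time (in terms of $D$) so that the substitution of $\nu_e(s,a)$ for $T_e\,d_{\pi_e,\mathcal M}(s,a)$ is legitimate.
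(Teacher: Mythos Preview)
Your decomposition is close in spirit but differs from the paper's in one structural choice: you apply the Poisson equation on the \emph{true} MDP $\mathcal M$ (Term (iii)) and handle the model mismatch separately as a difference of long-term averages (Term (ii)) via \citep[Lemma~5]{ortner2020variational}. The paper instead applies the gain--bias relation on the \emph{sampled} MDP $\widehat{\mathcal M}_e$, writing $\tilde\lambda_{\pi_e}^R=(\tilde P_{\pi_e}\tilde V_{\pi_e})(s_t)-\tilde V_{\pi_e}(s_t)+r_{\pi_e}(s_t)$ with the bias $\tilde V_{\pi_e}$ of $\widehat{\mathcal M}_e$, and then splits $(\tilde P_{\pi_e}\tilde V)(s_t)-\tilde V(s_t)$ into $(\tilde P_{\pi_e}-P_{\pi_e})\tilde V(s_t)$ plus $(P_{\pi_e}\tilde V)(s_t)-\tilde V(s_t)$, exactly as in UCRL2 \citep{jaksch2010near}. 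The first piece is summed along the \emph{realized} trajectory, directly producing $\sum_{s,a}\nu_e(s,a)\,\|\tilde P_e(\cdot|s,a)-P(\cdot|s,a)\|_1\cdot\mathrm{sp}(\tilde V)$; the second piece gives the telescoping plus martingale, as in your Term (iii) but with $\tilde V$ rather than your $h_e$. The span is bounded by the sampled diameter $\tilde D$, and $\mathbb E[\tilde D]=D$ via the posterior-sampling lemma. For constraints the appendix proof repeats this verbatim with $c_k$ in place of $r$ after invoking $\tilde\lambda_{\pi_e}^k\le C_k$; despite the main-text sketch citing Ortner's lemma, the full proof never actually uses it.

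The practical consequence is that the obstacle you correctly flag---replacing $T_e\,d_{\pi_e,\mathcal M}(s,a)$ by $\nu_e(s,a)$ with error controlled by $poly(D)$---does not arise in the paper's route at all. That substitution would need the mixing time of the specific policy $\pi_e$ on $\mathcal M$, which is not bounded by the diameter in general (the diameter takes a minimum over policies, whereas mixing time is policy-specific); under the paper's stated assumptions this step is a genuine gap in your plan. By keeping the bias on the sampled side and summing over realized states, the paper sidesteps the issue entirely and never touches the stationary occupancy measure $d_{\pi_e}$. Your treatment of Term (i)---posterior-sampling equivalence for rewards, feasibility on $\widehat{\mathcal M}_e$ for costs---matches the paper exactly.
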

\begin{proof}[Proof Outline] We break the cumulative regret into the regret incurred in each epoch $e$. This gives us:
\begin{align}
\mathbb{E}\left[R_T\right] &= \mathbb{E}\left[\sum_{e=1}^E\sum_{t=t_e}^{t_{e+1}-1} \left(\lambda_{\pi^*}^{P,r} - r(s_t, a_t) \right)\right]\\
    &= \sum_{e=1}^E\mathbb{E}\left[\sum_{t=t_e}^{t_{e+1}-1} \left(\lambda_{\pi^*}^{P,r} - r(s_t, a_t) \right)\right]\label{eq:break_into_epochs}\\
    &= \sum_{e=1}^E\mathbb{E}\left[\sum_{t=t_e}^{t_{e+1}-1} \left(\lambda_{\pi_e}^{\Tilde{P}_e,r} - r(s_t, a_t) \right)\right]\label{eq:use_posterior_sampling_lemma}\\
    &= \sum_{e=1}^E\mathbb{E}\left[\sum_{t=t_e}^{t_{e+1}-1}\left(\lambda_{\pi_e}^{\Tilde{P}_e,r} - \lambda_{\pi_e}^{P,r}+ \lambda_{\pi_e}^{P,r} - r(s_t, a_t)\right)\right]\nonumber\\
    &= \sum_{e=1}^E\mathbb{E}\left[\sum_{t=t_e}^{t_{e+1}-1}\left(\lambda_{\pi_e}^{\Tilde{P}_e,r} - \lambda_{\pi_e}^{P,r}\right)\right]\nonumber\\
    &~~+ \mathbb{E}\left[\sum_{e=1}^E\sum_{t=t_e}^{t_{e+1}-1}\left(\lambda_{\pi_e}^{P,r} - r(s_t, a_t)\right)\right] \label{eq:regret_breakdown}
\end{align}
The Equation \eqref{eq:use_posterior_sampling_lemma} follows from \cite[{Lemma 1}]{osband2013more} for regret each each epoch of Equation \eqref{eq:break_into_epochs}. Proceeding from Equation \eqref{eq:regret_breakdown} requires additional consideration. Typical proof techniques to bound regret requires a bounded bias-span ($\max_{s,s'}(h_{\pi}^{\tilde{P}_e,r}(s)-h_{\pi}^{\tilde{P}_e,r}(s'))$) which may be large for the sampled MDP. For this, we consider an MDP for the transition probability $P_e^r$ satisfies 
\begin{align}
    \lambda_{\pi_e}^{P_e^r,r}&\ge \max_{P'\in \mathcal{P}_{t_e}}\lambda_{\pi_e}^{P',r},\text{where}\\
    \mathcal{P}_{t_e} &= \Big\{P':\|P'(\cdot|s,a) - \Bar{P}_{t_e}(\cdot|s,a)\|_1 \nonumber\\
    &~~~~~~~~\le \sqrt{\frac{14S\log(AT)}{N_e(s,a)}} \Big\}~\forall~s,a\nonumber
\end{align}
where $\Bar{P}_{t_e}(\cdot|s,a)$ is the estimated transition probability given $s,a$ at time $t_e$. We now have,
\begin{align}
R(T) &\le \sum_{e=1}^E\mathbb{E}\left[\sum_{t=t_e}^{t_{e+1}-1}\left(\lambda_{\pi_e}^{P_e^r,r} - \lambda_{\pi_e}^{P,r}\right)\right]\nonumber\\
    &~~+ \sum_{e=1}^E\mathbb{E}\left[\sum_{t=t_e}^{t_{e+1}-1}\left(\lambda_{\pi_e}^{P,r} - r(s_t, a_t)\right)\right] \label{eq:regret_breakdown_optimal_MDP}
\end{align}
The first term of Equation \eqref{eq:regret_breakdown_optimal_MDP} is bounded by bounding the expected Bellman error. The second term is converted to a Martingale sequence by conditioning it on the state $s_{t_e}$ and is bounded using the ergodicity of the MDP $\mathcal{M}$ and Azuma's concentration inequality. The complete proof on bounding the regret is provided in the supplementary material.

Regarding the constraint violations, for each $k\in[K]$, we want to bound,
\begin{align}
    \mathbb{E}\left[R^k(T)\right] &= \mathbb{E}\left[\left(\sum_{t=1}^Tc_k(s_t, a_t) - TC_k\right)_+\right]
\end{align}

We divide the constraint violation regret into regret over epochs as well. Now, for each epoch, we know that the constraint is satisfied by the policy for the sampled MDP. This allows us to obtain:
\begin{align}
    \mathbb{E}\left[R^k(T)\right]&=\mathbb{E}\left[\left(\sum_{e}\sum_{t=t_e}^{t_{e+1}-1}\left(c_k(s_t, a_t) - C_k\right)\right)_+\right] \\
    &=\mathbb{E}\Big[\Big(\sum_{e}\sum_{t=t_e}^{t_{e+1}-1}\Big(\left(c_k(s_t, a_t) - \zeta_{\pi_e}^{P,k}\right)\nonumber\\
    &~~~+ \left(\zeta_{\pi_e}^{P,k} - \zeta_{\pi_e}^{\Tilde{P}_e,k}\right) + \left(\zeta_{\pi_e}^{\Tilde{P}_e,k} - C_k\right)\Big)\Big)_+\Big]\label{eq:constraint_regret_breakdown}\\
    &=\mathbb{E}\Big[\left(\sum_{e}\sum_{t=t_e}^{t_{e+1}-1}c_k(s_t, a_t) - \zeta_{\pi_e}^{P,k}\right)_+\nonumber\\
    &~~~+ \left(\sum_{e}\sum_{t=t_e}^{t_{e+1}-1}\zeta_{\pi_e}^{P,k} - \zeta_{\pi_e}^{\Tilde{P}_e,k}\right)_+\nonumber
\end{align}
\begin{align}
    &~~~+ \left(\sum_{e}\sum_{t=t_e}^{t_{e+1}-1}\zeta_{\pi_e}^{\Tilde{P}_e,k} - C_k\right)_+\Big]\label{eq:max_zero_sum_break}\\
    &=\mathbb{E}\Big[\Big|\sum_{e}\sum_{t=t_e}^{t_{e+1}-1}\left(c_k(s_t, a_t) - \zeta_{\pi_e}^{P,k}\right)\Big|\nonumber\\
    &~~~+ \left|\sum_{e}\sum_{t=t_e}^{t_{e+1}-1}\zeta_{\pi_e}^{P,k} - \zeta_{\pi_e}^{\Tilde{P}_e,k}\right|\nonumber\\
    &~~~+ \left(\sum_{e}\sum_{t=t_e}^{t_{e+1}-1}\zeta_{\pi_e}^{\Tilde{P}_e,k} - C_k\right)_+\Big]\label{eq:convert_pos_to_mod}
\end{align}

The first term in Equation \eqref{eq:constraint_regret_breakdown} denotes the difference between the incurred costs and the expected costs from following policy $\pi_e$. The second term denotes the difference between the expected costs from policy $\pi_e$ on the true MDP and on the sampled MDP. The third terms denotes the violations of the policy $\pi_e$ which would be zero if the policy $\pi_e$ satisfies constraint Eqution \eqref{eq:cmdp_constraints} for the sampled MDP.
Equation \eqref{eq:max_zero_sum_break} is obtained from the fact $\max(0, x+y) \le \max(0,x) + \max(0,y)$ and Equation \eqref{eq:constraint_regret_breakdown} is obtained from the fact $\max(0,x) \le \vert x\vert$.

The first and second term in Equation \eqref{eq:constraint_regret_breakdown} are bounded similar to Equation \eqref{eq:regret_breakdown}, and we focus our attention to the third term. If the optimization problem in Equation \eqref{eq:optimization_equation}-\eqref{eq:cmdp_constraints} is feasible, the term $(\zeta_{\pi_e}^{\tilde{P}_e,k} - C_k) \le 0$ and if the optimization equation is infeasible, the term is upper bounded by $1$ as $C_k \ge 0$ and $\zeta_{\pi_e}^{\tilde{P}_e}\le 1$. Hence, we get:
\begin{align}
    &\left(\sum_{e}\sum_{t=t_e}^{t_{e+1}-1}\left(\zeta_{\pi_e}^{\Tilde{P}_e,k} - C_k\right)\right)_+ \nonumber\\
    &\le \sum_{e}\left(\sum_{t=t_e}^{t_{e+1}-1}\zeta_{\pi_e}^{\Tilde{P}_e,k} - C_k\right)_+\label{eq:break_violations_per_epoch}\\
    &= \sum_{e}\left(\sum_{t=t_e}^{t_{e+1}-1}\zeta_{\pi_e}^{\Tilde{P}_e,k} - C_k\right)_+\bm{1}\left\{t_{e+1}-t_e > \sqrt{T}\right\}\nonumber\\
    &~~~+ \sum_{e}\left(\sum_{t=t_e}^{t_{e+1}-1}\zeta_{\pi_e}^{\Tilde{P}_e,k} - C_k\right)_+\bm{1}\left\{t_{e+1}-t_e \le \sqrt{T}\right\}\\
    &\le \sum_{e}\sum_{t=t_e}^{t_{e+1}-1}\bm{1}\left\{t_{e+1}-t_e \le \sqrt{T}\right\}\label{eq:constraint_bounded_by_zero}\\
    &\le \sum_{e}\sqrt{T} = E\sqrt{T}\\
    &\le (1 + 2SA + SA\log_2(T/SA))\sqrt{T}\label{eq:bound_episodes}
\end{align}
where Equation \eqref{eq:break_violations_per_epoch} follows from the fact that total violations are less than the cumulative violations are considered per epoch.
Equation \eqref{eq:constraint_bounded_by_zero} follows from Lemma \ref{lem:feasibility_of_sampled_MDP} as $\left(\zeta_{\pi_e}^{\Tilde{P}_e,k} - C_k\right)\le 0$ when $t_e > \sqrt{T}$ and Equation \eqref{eq:bound_episodes} comes from \cite[{Proposition 1}]{jaksch2010near}.
\end{proof}

We note that the fundamental setup of unconstrained optimization ($K=0$), the bound is loose compared to that of UCRL2 algorithm \cite{jaksch2010near}. This is because we use a stochastic policy instead of a deterministic policy. Recall that the optimal policy for CMDP setup is possibly stochastic \cite{altman1999constrained}.

\section{Evaluation of the Proposed Algorithm}

\begin{figure}[t]
    \centering
    \subfigure[{Reward growth \textit{w.r.t.} time}]{
        \includegraphics[width=0.45\textwidth]{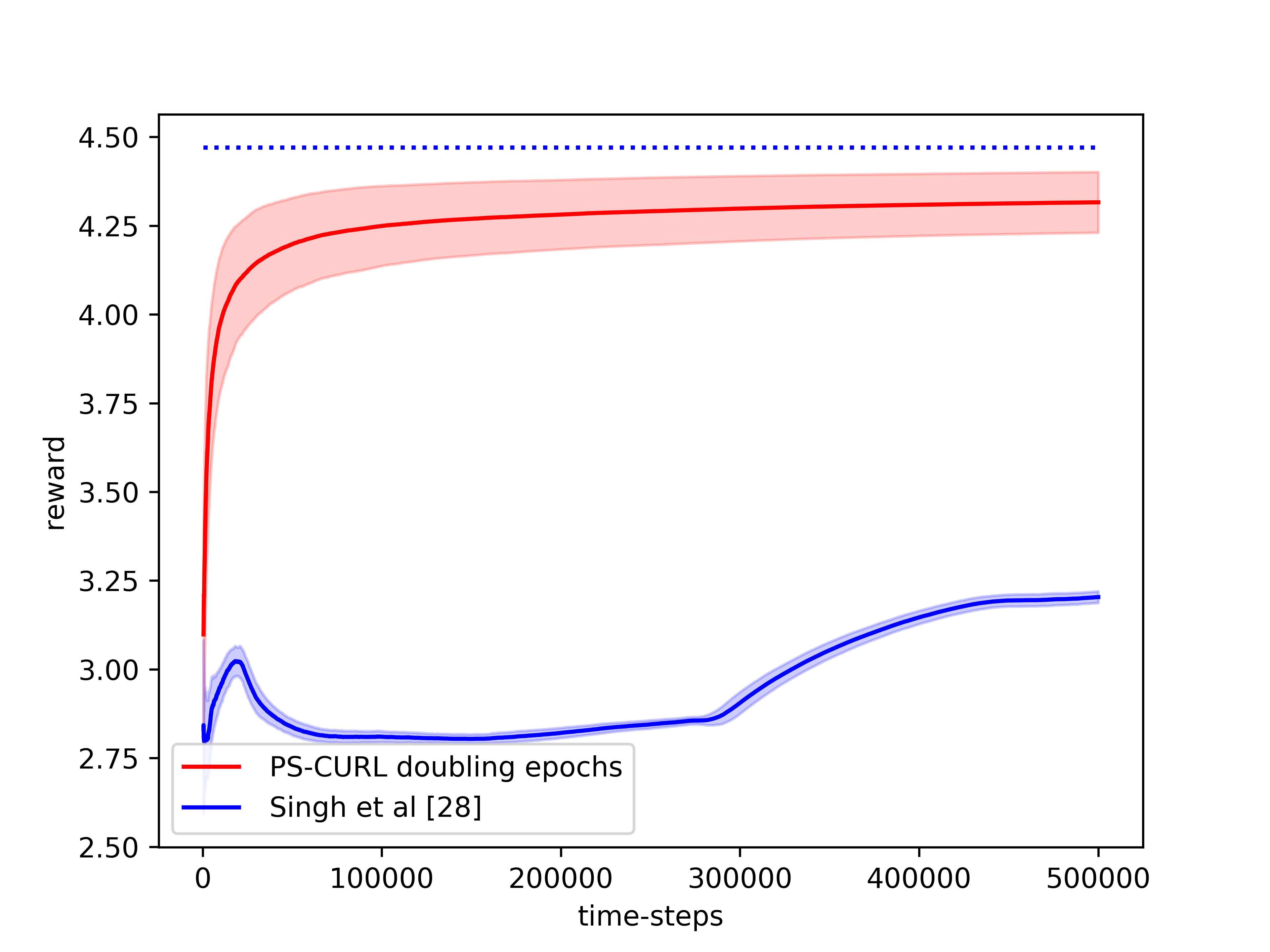}
        \label{fig:linear_rewards}
    }
    \subfigure[Regret \textit{w.r.t.} time]{
        \includegraphics[width=0.45\textwidth]{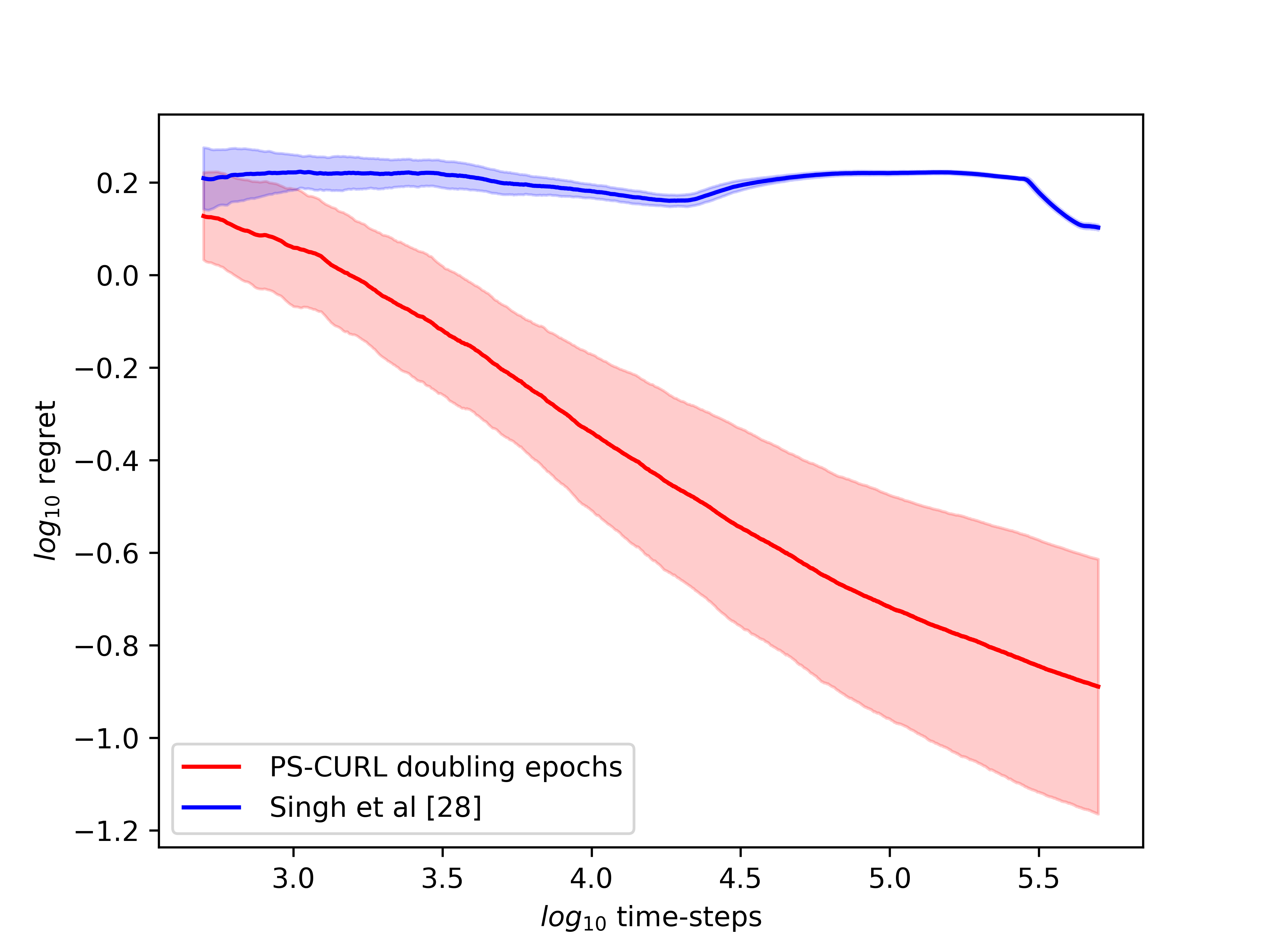}
        \label{fig:linear_regret}
    }
    \caption{Reward and regret performance of the proposed CMDP-PSRL algorithm on a flow and service control problem for a single queue. The algorithms is compared against the optimistic algorithm from Singh et al. \cite{singh2020learning} compared to which our algorithm extremely well.}
    \label{fig:reward_CMDP-PSRL}
\end{figure}

\begin{figure}[t]
    \centering
    \subfigure[Service constraints \textit{w.r.t.} time]{
        \includegraphics[width=0.45\textwidth]{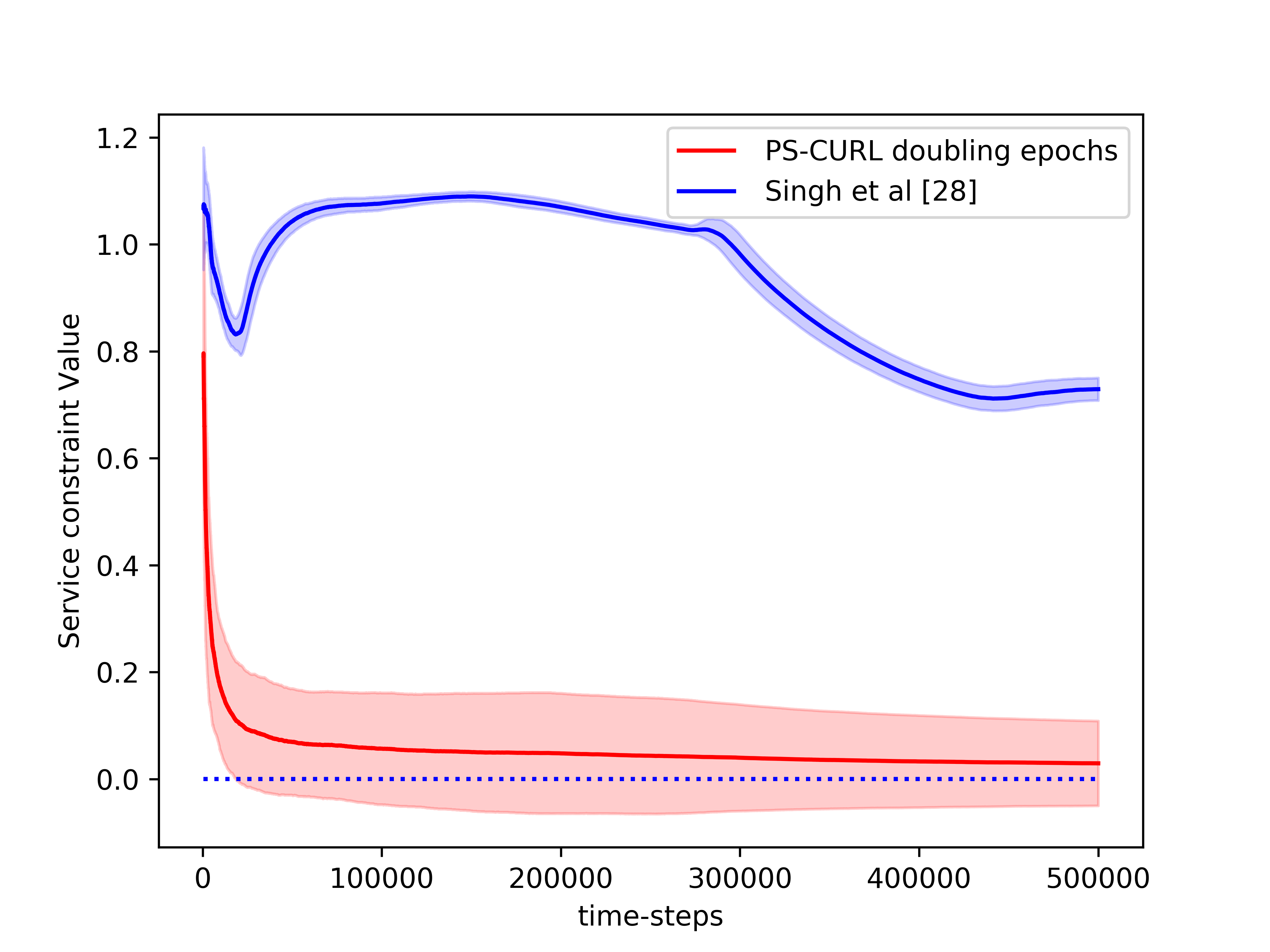}
        \label{fig:linear_service}
    }    
    \subfigure[Flow constraints \textit{w.r.t.} time]{
        \includegraphics[width=0.45\textwidth]{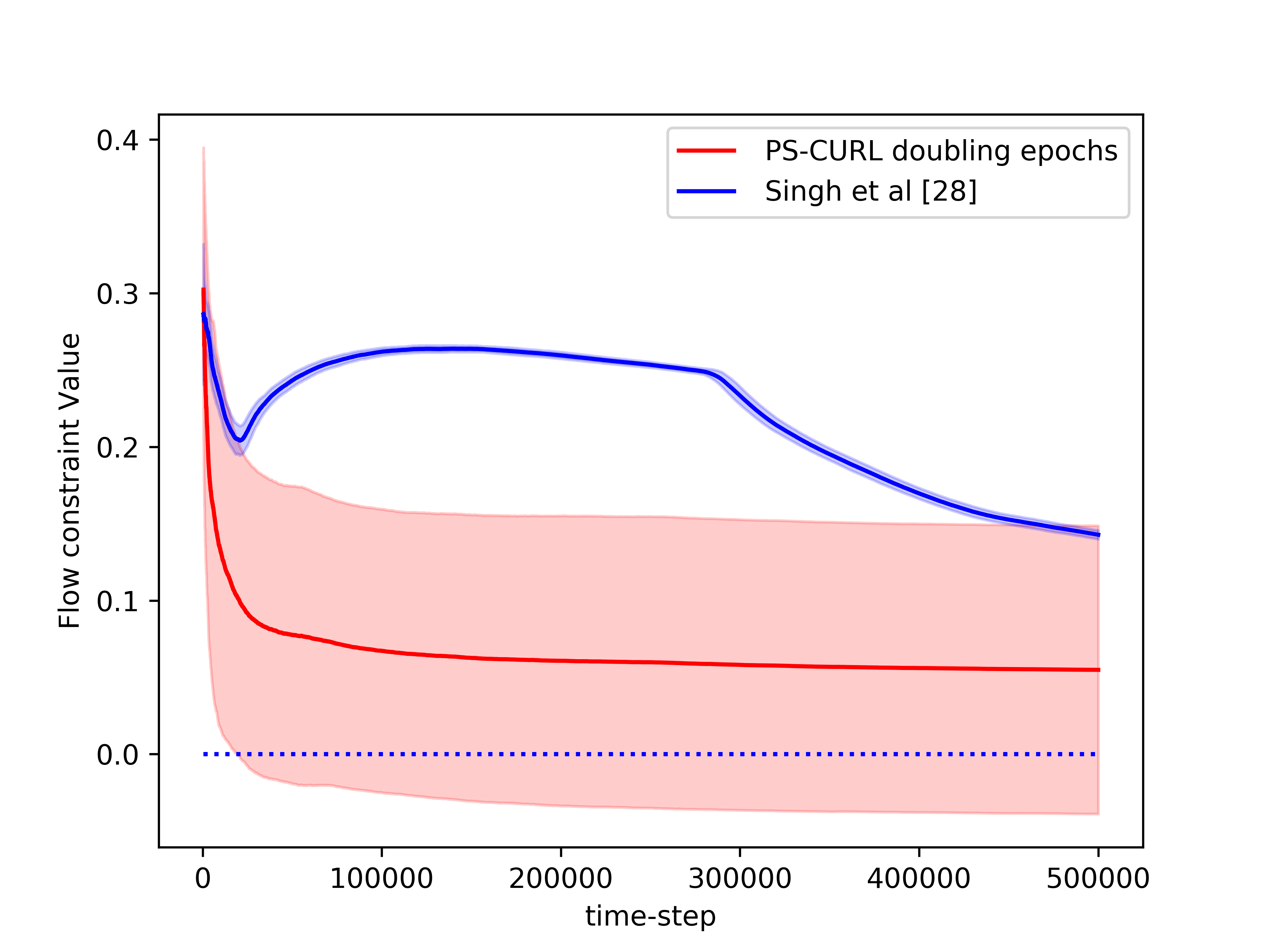}
        \label{fig:linear_flow}
    }        
    \caption{Constraint violation performance of the proposed CMDP-PSRL algorithm on a flow and service control problem for a single queue. The average constraint violations become zero as the algorithm proceeds, however, it never crosses zero to increase the reward further.}
    \label{fig:regret_CMDP-PSRL}
\end{figure}

To validate the performance of the proposed CDMP-PSRL algorithm and the understanding of our analysis,  we run the simulation on the flow and service control in a single-serve queue, which is introduced in \citep{altman1991constrained}. A discrete-time single-server queue with a buffer of finite size $L$ is considered in this case. The number of the customer waiting in the queue is considered as the state in this problem and thus $\vert S\vert=L+1$. Two kinds of the actions, service and flow, are considered in the problem and control the number of customers together. The action space for service is a finite subset $A$ in $[a_{min},a_{max}]$, where $0<a_{min}\leq a_{max}<1$. Given a specific service action $a$, the service a customer is successfully finished with the probability $b$. If the service is successful, the length of the queue will reduce by 1. Similarly, the space for flow is also a finite subsection $B$ in $[b_{min}, b_{max}]$. In contrast to the service action, flow action will increase the queue by $1$ with probability $b$ if the specific flow action $b$ is given. Also, we assume that there is no customer arriving when the queue is full. The overall action space is the Cartesian product of the $A$ and $B$. According to the service and flow probability, the transition probability can be computed and is given in the Table \ref{table:transition}.

\begin{table*}[ht]   
		\caption{Transition probability of the queue system}  
		\label{table:transition}
		\begin{center}  
			\begin{tabular}{|c|c|c|c|}  
				\hline  
				Current State & $P(x_{t+1}=x_t-1)$ & $P(x_{t+1}=x_t)$ & $P(x_{t+1}=x_t+1)$ \\ \hline
				$1\leq x_t\leq L-1$ & $a(1-b)$ & $ab+(1-a)(1-b)$ & $(1-a)b$ \\ \hline
				$x_t=L$ & $a$ & $1-a$ & $0$ \\ \hline
				$x_t=0$ & $0$ & $1-b(1-a)$ & $b(1-a)$ \\ 
				\hline  
			\end{tabular}  
		\end{center}  
\end{table*}

For the reward function as $r(s,a, b)$ and the constraints for service and flow as $c^1(s,a, b)$ and $c^2(s,a, b)$, respectively, and stationary policies for service and flow as $\pi_a$ and $\pi_b$, respectively, the problem can be defined as
\begin{equation}
    \begin{split}
        \max_{\pi_a,\pi_b} &\quad \lim\limits_{T\rightarrow\infty}\frac{1}{T}\sum_{t=1}^{T}r(s_t,\pi_a(s_t),\pi_b(s_t))\\
        s.t. &\quad \lim\limits_{T\rightarrow\infty}\frac{1}{T}\sum_{t=1}^{T}c^1(s_t,\pi_a(s_t),\pi_b(s_t))\geq 0\\
        &\quad \lim\limits_{T\rightarrow\infty}\frac{1}{T}\sum_{t=1}^{T}c^2(s_t,\pi_a(s_t),\pi_b(s_t))\geq 0
    \end{split}
\end{equation}

According to the discussion in \citep{altman1991constrained}, we define the reward function as $r(s,a,b)=5 - s$, which is an decreasing function only dependent on the state. It is reasonable to give higher reward when the number of customer waiting in the queue is small. For the constraint function, we define $c^1(s,a,b)=-10a + 6$ and $c^2 = - 8 *(1-b)^2+2$, which are dependent only on service and flow action, respectively. Higher constraint value is given if the probability for the service and flow are low and high, respectively.

In the simulation, the length of the buffer is set as $L=5$. The service action space is set as $[0.2,0.4,0.6,0.8]$ and the flow action space is set as $[0.4,0.5,0.6,0.7]$. We use the length of horizon $T=50000$ and run $50$ independent simulations of the proposed CMDP-PSRL algorithm. We also plot the standard deviation around the mean value in the shadow to show the random error. In order to compare this result to the optimal, we assume that the full information of the transition dynamics is known and then use  Linear Programming to solve the problem. The optimal cumulative reward from LP is shown to be $4.47$. The reward performance of the \NAM\ algorithm is shown in the Figure \ref{fig:reward_CMDP-PSRL} where we observe that the reward converges towards the optimal value. We also plot the constraint violations in Figure \ref{fig:regret_CMDP-PSRL}. The service and flow constraints converge to 0 as expected. We note that the reward of the proposed CMDP-PSRL  algorithm becomes closer the optimal reward as the algorithm proceeds, and to further increase the reward, it does not violates the constraint.

We also compared our algorithm against the optimistic algorithm of \cite{singh2020learning}. We note that their algorithm performs significantly worse compared to our algorithm. We account this poor performance on two accounts. An optimistic algorithm does not find a policy for transition probabilities close to $P$ for significantly large time. The other issue is because they consider confidence interval for each $P(s'|s,a)$. This also shows in their analysis and hence they obtain a $O(T^{2/3})$ regret bound. Further, the optimization problem takes a significantly more time to solve for optimistic setup. However, the variance of their optimistic algorithm is significantly lower compared to the variance of our \NAM\ algorithm.

\section{Conclusion} \label{conclusion}
This paper, considers the setup of reinforcement learning in ergodic infinite-horizon constrained Markov Decision Processes with $K$ long-term average constraint. We propose a posterior sampling based algorithm, \NAM, which proceeds in epochs. At every epoch, we sample a new CMDP and generate a solution for the constraint optimization problem. A major advantage of the posterior sampling based algorithm over an optimistic approach is, that it reduces the complexity of solving for the optimal solution of the constraint problem. We also study the proposed \NAM\ algorithm from regret perspective. We bound the regret of the reward collected by the \NAM\ algorithm as $\Tilde{O}(T_MS\sqrt{AT} + CS^2A/(1-\rho))$. Further, we bound the gap between the long-term average costs of the sampled MDP and the true MDP to bound the $K$ constraint violations as $\Tilde{O}(T_MS\sqrt{AT} + CS^2A/(1-\rho))$. Finally, we evaluate the proposed \NAM\ algorithm on a flow control problem for single queue and show that the proposed algorithm performs empirically well. This paper is the first work which obtains a $\Tilde{O}(\sqrt{T})$ regret bounds for ergodic MDPs with long-term average constraints using a posterior sampling algorithm. A model-free algorithm that obtains similar regret bounds for infinite horizon long-term average constraints remains an open problem. 

	\bibliography{refs}
	\onecolumn
\appendix
\section{Proof for Regret Bounds}\label{app:regret_bound_proof}
We now complete the proof of Theorem \ref{thm:regret_bound} here. 

\subsection{Variable Definitions}
We first define some important variables required for the proof.

We define value $V_{\gamma,\pi}^{P,r}, V_{\gamma,\pi}^{P,k}$ function for rewards $r$ and cost $c^k$ as:
\begin{align}
    V_{\gamma,\pi}^{P,r}(s) &= \mathbb{E}\left[\sum_{t=0}^\infty \gamma^tr(s_t,a_t)|s_0 = s\right]\\
    V_{\gamma,\pi}^{P,k}(s) &= \mathbb{E}\left[\sum_{t=0}^\infty \gamma^tc^k(s_t,a_t)|s_0 = s\right]
\end{align}

We also define Q-value $Q_{\gamma,\pi}^{P,r}, Q_{\gamma,\pi}^{P,k}$ function for rewards $r$ and cost $c^k$ as:
\begin{align}
    Q_{\gamma,\pi}^{P,r}(s,a) &= \mathbb{E}\left[\sum_{t=0}^\infty \gamma^tr(s_t,a_t)|s_0 = s, a_0 = a\right]\\
    Q_{\gamma,\pi}^{P,k}(s,a) &= \mathbb{E}\left[\sum_{t=0}^\infty \gamma^tc^k(s_t,a_t)|s_0 = s, a_0 = a\right]
\end{align} 

Based on this, we define Bellman error $B_{\pi}^{P',r}, B_{\pi}^{P',k}$ function for rewards $r$ and cost $c^k$ as:

\begin{align}
    B_{\pi}^{P',r} &= \lim_{\gamma\to 1}\left(r(s,a) + \sum_{s'}P'(s'|s,a)V_{\gamma,\pi}^{P,r}(s,a) - Q_{\gamma,\pi}^{P,r}(s,a)\right)\label{eq:Bellman_def}\\
    B_{\pi}^{P',k} &= \lim_{\gamma\to 1}\left(c^k(s,a) + \sum_{s'}P'(s'|s,a)V_{\gamma,\pi}^{P,k}(s,a) - Q_{\gamma,\pi}^{P,k}(s,a)\right)
\end{align}

\newpage
\subsection{Auxiliary Lemmas}

We now state and prove various lemmas required to complete the proof of Theorem \ref{thm:regret_bound}.

The first lemma obtains concentration bounds for the sampled MDP. We have:

\begin{lemma}\label{lem:ell_1_conc}
The probability that the event 
\begin{align}
    \mathcal{E}_t = \left\{\Vert \Bar{P}_t(\cdot\vert s, a) - P(\cdot\vert s, a)\Vert_1 \leq \sqrt{\frac{14S\log(2AT)}{\max\{1, n_t(s,a)\}}}\forall (s,a)\in\mathcal{S}\times\mathcal{A}\right\}
\end{align}
fails to occur for any $t\le T$ is bounded by $\frac{1}{T^5}$.
\end{lemma}
\begin{proof}[Proof Outline]
From the result of \cite{weissman2003inequalities}, the $\ell_1$ distance of a probability distribution over $S$ events with $n$ samples is bounded as:
\begin{align}
    \mathbb{P}\left(\Vert  P(\cdot\vert s,a) - \Bar{P}_t(\cdot\vert s,a)\Vert_1\geq \epsilon\right)&\leq (2^S-2)\exp{\left(-\frac{n(s,a)\epsilon^2}{2}\right)}\nonumber\\
    &\le (2^S)\exp{\left(-\frac{n(s,a)\epsilon^2}{2}\right)}
\end{align}

Thus, for $\epsilon = \sqrt{\frac{2}{n(s,a)}\log(2^S20 SAT^7)}\leq \sqrt{\frac{14S}{n(s,a)}\log(2AT)} \leq \sqrt{\frac{14S}{n(s,a)}\log(2AT)}$, we have
\begin{align}
    \mathbb{P}\left(\Vert  P(\cdot\vert s,a) - \Bar{P}_t(\cdot\vert s,a)\Vert_1\geq \sqrt{\frac{14S}{n(s,a)}\log(2AT)}\right)&\leq (2^S)\exp{\left(-\frac{n(s,a)}{2}\frac{2}{n(s,a)}\log(2^S20 SAT^7)\right)}\\
    &= 2^S \frac{1}{2^S 20 SAT^7}\\
    &= \frac{1}{20AST^7}
\end{align}

We sum over the all the possible values of $n(s,a)$ till $t$ time-step to bound the probability that the event $\mathcal{E}_t$ does not occur as:
\begin{align}
    \sum_{n(s,a)=1}^t \frac{1}{20SAT^7} \leq \frac{1}{20SAT^6}
\end{align}

Finally, summing over all the $s,a$, we get
\begin{align}
    \mathbb{P}\left(\Vert  P(\cdot\vert s,a) -\Bar{P}_t(\cdot\vert s,a) \Vert_1\geq \sqrt{\frac{14S}{n(s,a)}\log(2AT)}~\forall s,a\right) \leq \frac{1}{20t^6}
\end{align}

Further, using union bounds and summing over all the $t\le T$, we get
\begin{align}
    \mathbb{P}\left(\Vert  P(\cdot\vert s,a) -\Bar{P}_t(\cdot\vert s,a) \Vert_1\geq \sqrt{\frac{14S}{n(s,a)}\log(2AT)}~\forall s,a~\forall~t\le T \right) &\leq \sum_{t=1}^T\frac{1}{20T^6}\\
    &\le \frac{1}{T^5} 
\end{align}
\end{proof}

The next lemma relates the difference between average per step reward $\lambda_{\pi}^{P,r}$ (or cost $\lambda_{\pi}^{P,k}$) for following policy $\pi$ on true MDP with transition probabilities and average per step reward $\lambda_\pi^{\Tilde{P},r}$ for following policy $\pi$ on MDP with transition probabilities $\tilde{P}$ with the Bellman error $B_\pi^{\tilde{P},r}(s,a)$ as:

\begin{lemma}\label{lem:bound_average_by_bellman}
The difference of long-term average rewards for running the policy $\pi$ on the MDP, $\lambda_{\pi}^{\Tilde{P},r}$, and the average long-term average rewards for running the policy $\pi$ on the true MDP, $\lambda_{\pi}^{\Tilde{P},r}$, is the long-term average Bellman error as
\begin{align}
    \lambda_{\pi}^{\Tilde{P},r} - \lambda_{\pi}^{P,r} = \sum_{s,a}d_{\pi}(s,a) B_{\pi}^{\Tilde{P},r}(s,a) = \mathbb{E}_{\pi, P}\left[B_{\pi}^{\Tilde{P},r}(s,a)\right].
\end{align}
\end{lemma}

\begin{proof}
Note that for all $s\in\mathcal{S}$, we have:
\begin{align}
    V_{\gamma,\pi}^{\Tilde{P},r}(s) &= \mathbb{E}_{a\sim\pi}\left[Q_{\gamma,\pi}^{\Tilde{P},r}(s,a)\right]\\
    &= \mathbb{E}_{a\sim\pi}\left[B_{\gamma,\pi}^{\Tilde{P},r}(s,a) + r(s,a) + \gamma\sum_{s'\in\mathcal{S}}P(s'\vert s,a)V_{\gamma\pi}^{\Tilde{P},r}(s')\right]\label{eq:optimistic_MDP_lambda}
\end{align}
where Equation \eqref{eq:optimistic_MDP_lambda} follows from the definition of the Bellman error for state action pair $s,a$.

Similarly, for the true MDP, we have,
\begin{align}
    V_{\gamma,\pi}^{P,r}(s) &= \mathbb{E}_{a\sim\pi}\left[Q_{\gamma,\pi}^{P,r}(s,a)\right]\\
    &= \mathbb{E}_{a\sim\pi}\left[r(s,a)+ \gamma\sum_{s'\in\mathcal{S}}P(s'\vert s,a)V_{\gamma,\pi}^{P,r}(s')\right] \label{eq:true_MDP_lambda}
\end{align}

Subtracting Equation \eqref{eq:true_MDP_lambda} from Equation \eqref{eq:optimistic_MDP_lambda}, we get:
\begin{align}
V_{\gamma,\pi}^{\Tilde{P},r}(s) - V_{\gamma,\pi}^{P,r}(s) &= \mathbb{E}_{a\sim\pi}\left[B_{\gamma,\pi}^{\Tilde{P},r}(s,a) + \gamma\sum_{s'\in\mathcal{S}}P(s'\vert s,a)\left(V_{\gamma,\pi}^{\Tilde{P},r} - V_{\gamma,\pi}^{P,r}\right)(s')\right]\\
&= \mathbb{E}_{a\sim\pi}\left[B_{\gamma,\pi}^{\Tilde{P},r}(s,a)\right] + \gamma\sum_{s'\in\mathcal{S}}P_{\pi}\left(V_{\gamma,\pi}^{\Tilde{P},r} - V_{\gamma,\pi}^{P,r}\right)(s')
\end{align}
Using the vector format for the value functions, we have,
\begin{align}
    \Bar{V}_{\gamma,\pi}^{\Tilde{P},r} - \Bar{V}_{\gamma,\pi}^{P,r} &= \left(I-\gamma P_{\pi}\right)^{-1}{B}_{\gamma,\pi}^{P,r}
\end{align}
Now, converting the value function to average per-step reward we have,
\begin{align}
    \lambda_{\pi}^{\Tilde{P},r}\bm{1}_S - \lambda_{\pi}^{P,r}\bm{1}_S &= \lim_{\gamma\to1}(1-\gamma)\left(\Bar{V}_{\gamma,\pi}^{\Tilde{P},r} - \Bar{V}_{\gamma,\pi}^{P,r}\right)\\
    &= \lim_{\gamma\to1}(1-\gamma)\left(I-\gamma P_{\pi}\right)^{-1}{B}_{\gamma,\pi}^{\Tilde{P},r}\\
    &= \left(\sum_{s,a}d_{\pi}^P(s,a) B_{\pi}^{\Tilde{P},r}(s,a)\right)\bm{1}_S
\end{align}
where the last equation follows from the definition of occupancy measures by \cite{puterman2014markov}, and the existence of the limit $\lim_{\gamma\to1}B_{\gamma,\pi}^{\tilde{P},r}$ in Equation \eqref{eq:value_to_bias}.
\end{proof}

After relating the gap between the long-term average rewards of policy $\pi_e$ on the two MDPs, we now want to bound the sum of Bellman error over an epoch. For this, we first bound the Bellman error for a particular state action pair $s,a$ in the form of following lemma. We have,
\begin{lemma}\label{lem:bound_bellman_s_a_main}
For an MDP with rewards $r(s,a)$ and transition probability $\Tilde{P}(s'\vert s,a)$ such that $\Vert \Tilde{P}(\cdot\vert s,a) - P(\cdot\vert s,a)\Vert_1 \le \epsilon_{s,a}$, the Bellman error $B_{\pi_e}^{\Tilde{P},r}(s,a)$ for state-action pair $s,a$ is upper bounded as
\begin{align}
B_{\pi}^{\Tilde{P},r}(s,a) \le \big\Vert \Tilde{P}(\cdot\vert s,a) - P(\cdot\vert s,a)\big\Vert_1\Vert h_{\pi}^{\Tilde{P},r}(\cdot)\Vert _\infty
\end{align}
where $\Vert h_{\pi}^{\Tilde{P},r}(\cdot)\Vert _\infty$ is the bias-span of the MDP with transition probability $\Tilde{P}$.
\end{lemma}
\begin{proof}
Starting with the definition of Bellman error in Equation \eqref{eq:Bellman_def}, we get
\begin{align}
B_{\pi}^{\Tilde{P},r}(s,a) &= \lim_{\gamma\to1}B_{\gamma,\pi}^{\Tilde{P},r}(s,a)\\
&= \lim_{\gamma\to1}\left(Q_{\gamma,\pi}^{\Tilde{P},r}(s,a) - \left(r(s,a) +\gamma \sum_{s'\in\mathcal{S}}P(s'\vert s,a)V_{\gamma,\pi}^{\Tilde{P},r} \right)\right)\\
&=\lim_{\gamma\to1}\left(\left(r(s,a) + \gamma\sum_{s'\in\mathcal{S}} \Tilde{P}(s'\vert s,a)V_{\gamma,\pi}^{\Tilde{P},r}(s')\right) - \left(r(s,a) +\gamma \sum_{s'\in\mathcal{S}}P(s'\vert s,a)V_{\gamma,\pi}^{\Tilde{P},r}(s') \right)\right)\\
&= \lim_{\gamma\to1}\gamma\sum_{s'\in\mathcal{S}}\left( \Tilde{P}(s'\vert s,a) - P(s'\vert s,a)\right)V_{\gamma,\pi}^{\Tilde{P},r}(s')\label{eq:rewards_known}\\
&= \lim_{\gamma\to1}\gamma\left(\sum_{s'\in\mathcal{S}}\left( \Tilde{P}(s'\vert s,a) - P(s'\vert s,a)\right)V_{\gamma,\pi}^{\Tilde{P},r}(s') + V_{\gamma,\pi}^{\Tilde{P},r}(s) - V_{\gamma,\pi}^{\Tilde{P},r}(s)\right)\\
&= \lim_{\gamma\to1}\gamma\Big(\sum_{s'\in\mathcal{S}}\left( \Tilde{P}(s'\vert s,a) - P(s'\vert s,a)\right)V_{\gamma,\pi}^{\Tilde{P},r}(s') - \sum_{s'\in\mathcal{S}} \Tilde{P}(s'\vert s,a)V_{\gamma,\pi}^{\Tilde{P},r}(s)\nonumber\\
&~~~~~+ \sum_{s'\in\mathcal{S}} P(s'\vert s,a)V_{\gamma,\pi}^{\Tilde{P},r}(s)\Big)\\
&= \lim_{\gamma\to1}\gamma\left(\sum_{s'\in\mathcal{S}}\left( \Tilde{P}(s'\vert s,a) - P(s'\vert s,a)\right)\left(V_{\gamma,\pi}^{\Tilde{P},r}(s') - V_{\gamma,\pi}^{\Tilde{P},r}(s)\right)\right)\\
&= \left(\sum_{s'\in\mathcal{S}}\left( \Tilde{P}(s'\vert s,a) - P(s'\vert s,a)\right)\lim_{\gamma\to1}\gamma\left(V_{\gamma,\pi}^{\Tilde{P},r}(s') - V_{\gamma,\pi}^{\Tilde{P},r}(s)\right)\right)\label{eq:interchange_limit_and_expectation}\\
&= \left(\sum_{s'\in\mathcal{S}}\left( \Tilde{P}(s'\vert s,a) - P(s'\vert s,a)\right)h_{\pi}^{\Tilde{P},r}(s')\right)\label{eq:value_to_bias}\\
&\le \Big\Vert \Tilde{P}(\cdot\vert s,a) - P(\cdot\vert s,a)\Big\Vert_1\Vert h_{\pi}^{\Tilde{P},r}(\cdot)\Vert_\infty\label{eq:reward_holders}\\
&\le \epsilon_{s,a}\Tilde{T}_M \label{eq:bound_bias}
\end{align}
where Equation \eqref{eq:rewards_known} comes from the assumption that the rewards are known to the agent. Equation \eqref{eq:interchange_limit_and_expectation} follows from the fact that the difference between value function at two states is bounded. Equation \eqref{eq:value_to_bias} comes from the definition of bias term \cite{puterman2014markov} where $h$ is the bias of the policy $\pi$ when run on the sampled MDP. Equation \eqref{eq:reward_holders} follows from H\"{o}lder's inequality. In Equation \eqref{eq:bound_bias}, the $\ell_1$ norm of probability vector difference is bounded from the definition.

Additionally, note that the $\ell_1$ norm in Equation \eqref{eq:reward_holders} is bounded by $2$. Thus the Bellman error is loose upper bounded by $2\Vert h_{\pi}^{\Tilde{P},r}(\cdot)\Vert_\infty$ for all state-action pairs.
\end{proof}

\begin{lemma}[Bounded Span of optimal MDP in confidence interval]\label{lem:bounded_v_span_of_optimal_MDP}
For a MDP with rewards $r(s,a)$ and transition probabilities $P_e^r=\arg\max_{P_e\in\mathcal{P}_{t_e}}\lambda_{\pi_e}^{P_e,r}$, for policy $\pi_e$, the difference of bias of any two states $s$, and $s'$, is upper bounded by the mixing time of the true MDP $T_M$ as:
\begin{align}
    h_{\pi_e}^{P_e^r,r}(s) - h_{\pi_e}^{P_e^r,r}(s') \leq T_M~ \forall~s,s'\in \mathcal{S}
\end{align}
\end{lemma}

\begin{proof}
Note that $\lambda_{\pi_e}^{P_e^r,r} \ge \lambda_{\pi_e}^{P',r}$ for all $P'\in\mathcal{P}_{t_e}$. Now, consider the following Bellman equation:
\begin{align}
    h_{\pi_e}^{P_e^r,r}(s) &= r_{\pi_e}(s,a) - \lambda_{\pi_e}^{P_e^r,r} + <P_{\pi_e,e}^r(\cdot\vert s), h_{\pi_e}^{P_e^r,r}>\nonumber\\
    &= Th_{\pi_e}^{P_e^r,r}(s)
\end{align}
where $r_{\pi_e}(s) = \sum_{a}\pi_e(a\vert s)r(s,a)$ and $P_{\pi_e,e}^r(s'\vert s) = \sum_{a}\pi(a\vert s)P_e^r(s'\vert s,a)$.

Consider two states $s, s'\in \mathcal{S}$. Also, let $\tau$ be a random variable defined as:
\begin{align}
    \tau = \min\{t\geq 1: s_t = s', s_1 = s\}
\end{align}

We also define another operator, 
\begin{align}
\bar{T}h(s)=
\begin{cases}
\min_{s,a}r(s,a) - \lambda_{\pi_e}^{P_e^r,r} + <P_{\pi_e}(\cdot\vert s), h>, &s\neq s'\\
h_{\pi_e}^{P_e^r,r}(s'), &s=s'
\end{cases}
\end{align}
where $P_{\pi_e}(\cdot\vert s) = \sum_{a}\pi_e(a\vert s)P(s'\vert s,a)$.

Now, note that 
\begin{align}
    h(s) &= Th(s)\\
    &=\max_{P'\in\mathcal{P}_{t_e}}\left(r_{\pi_e}^r(s) -\lambda_{\pi_e}^{P_e^r,r} + <P_{\pi_e}', h>\right)\\
    &\ge r_{\pi_e}^r(s) -\lambda_{\pi_e}^{P_e^r,r} + <P_{\pi_e}, h>\\
    &\ge \min_{s,a}r(s,a) -\lambda_{\pi_e}^{P_e^r,r} + <P_{\pi_e}, h>\\
    &= \bar{T}h(s)
\end{align}
Further, for any two vectors $u, v$, where all the elements of $u$ are not smaller than $w$ we have $\bar{T}u \ge \bar{T}w$. Hence, we have $\bar{T}^nh_{\pi}^{P,r}(s) \le h_{\pi}^{P,r}(s)$ for all $s$. Unrolling the recurrence, we have
\begin{align}
    h_{\pi}^{P_e^r,r}(s) \ge \bar{T}^nh_{\pi}^{P_e^r,r}(s) = \mathbb{E}\left[-(\lambda_\pi^{P_e^r,r} - \min_{s,a}r(s,a))(n\wedge\tau) + h_{\pi}^{P_e^r,r}(s_{n\wedge\tau})\right]
\end{align}
For $\lim n\to \infty$, we have $h_{\pi}^{P_e^r,r}(s) \ge h_{\pi}^{P_e^r,r}(s') - T_M$, completing the proof.
\end{proof}

\subsection{Proof of results from main text}
After stating the necessary lemmas, we can now prove Lemma \ref{lem:feasibility_of_sampled_MDP} and Theorem \ref{thm:regret_bound}.

\begin{proof}[Proof of Theorem \ref{thm:regret_bound}]
We continue our proof from Equation \eqref{eq:regret_breakdown_optimal_MDP}. We had:
\begin{align}
    R(T) &\le \sum_{e=1}^E\mathbb{E}\left[\sum_{t=t_e}^{t_{e+1}-1}\left(\lambda_{\pi_e}^{P_e^r,r} - \lambda_{\pi_e}^{P,r}\right)\right] + \sum_{e=1}^E\mathbb{E}\left[\sum_{t=t_e}^{t_{e+1}-1}\left(\lambda_{\pi_e}^{P,r} - r(s_t, a_t)\right)\right]\\
    &= R_1(T) + R_2(T)\label{eq:regret_breakdown_optimal_MDP_app}
\end{align}
where $R_1(T)$ and $R_2(T)$ are:
\begin{align}
    R_1(T) &= \sum_{e=1}^E\mathbb{E}\left[\sum_{t=t_e}^{t_{e+1}-1}\left(\lambda_{\pi_e}^{P_e^r,r} - \lambda_{\pi_e}^{P,r}\right)\right]\\
    R_2(T) &= \sum_{e=1}^E\mathbb{E}\left[\sum_{t=t_e}^{t_{e+1}-1}\left(\lambda_{\pi_e}^{P,r} - r(s_t, a_t)\right)\right]
\end{align}

We first consider $R_2(T)$ term. We start by defining filtration $\mathcal{H}_t = \{s_0,a_0, \cdots, s_t, a_t\}$ as the set of of observed states and played actions. Further, we have $\lambda_{\pi_e}^{P,r}$ as
\begin{align}
    \lambda_{\pi_e}^{P,r} = \mathbb{E}_{(s,a)\sim\pi_e,P}[r(s,a)]
\end{align}
We have,
\begin{align}
    \mathbb{E}_{(s,a)\sim\pi_e, P}[r(s,a)] &= \mathbb{E}_{(s,a)\sim\pi_e, P}[r(s,a)] \pm \mathbb{E}_{(s_t,a_t)\sim\pi_e, P}[r(s_t,a_t)\vert \mathcal{H}_{t_e-1}]\label{eq:expectation_to_conditional_expectation}\\
    &=\mathbb{E}_{(s_t,a_t)\sim\pi_e, P}[r(s_t,a_t)\vert \mathcal{H}_{t_e-1}] + \left(\mathbb{E}_{(s,a)\sim\pi_e, P}[r(s,a)]- \mathbb{E}_{(s_t,a_t)\sim\pi_e, P}[r(s_t,a_t)\vert \mathcal{H}_{t_e-1}]\right)\\
    &\le\mathbb{E}_{(s_t,a_t)\sim\pi_e, P}[t(s_t,a_t)\vert \mathcal{H}_{t_e-1}] + 2\left(\Vert \pi_e(a\vert s)d_{\pi_e}(s) - \pi_e(a\vert s)P_{\pi,s_{t_e-1}}^{t-t_e+1}(s)\Vert _{TV}\right)\label{eq:change_expectation_to_diff_prob}\\
    &\le \mathbb{E}_{(s_t,a_t)\sim\pi_e, P}[r(s_t,a_t)\vert \mathcal{H}_{t_e-1}] + 2CS\rho^{t-t_e}\label{eq:TV_bounded_by_l1}
\end{align}
Hence, we have,
\begin{align}
    \sum_{t=t_e}^{t_{e+1}-1}\left(\lambda_{\pi_e}^{P,r} - r(s_t,a_t)\right) &= \sum_{t=t_e}^{t_{e+1}-1}\left(\mathbb{E}_{(s,a)\sim\pi_e,P}[r(s,a)] - r(s_t,a_t)\right)\\
    &\le \sum_{t=t_e}^{t_{e+1}-1}\left(\mathbb{E}_{(s_t,a_t)\sim\pi_e, P}[r(s_t,a_t)\vert \mathcal{H}_{t_e-1}] + 2CS\rho^{t-t_e} - r(s_t,a_t)\right)\\
    &\le \sum_{t=t_e}^{t_{e+1}-1}\left(\mathbb{E}_{(s_t,a_t)\sim\pi_e, P}[r(s_t,a_t)\vert \mathcal{H}_{t_e-1}] - r(s_t,a_t)\right) + \sum_{t=t_e}^\infty 2CS\rho^{t-t_e}\\
    &\le \sum_{t=t_e}^{t_{e+1}-1}\left(\mathbb{E}_{(s_t,a_t)\sim\pi_e, P}[r(s_t,a_t)\vert \mathcal{H}_{t_e-1}] - r(s_t,a_t)\right) + \frac{2CS}{1-\rho}
\end{align}
Using Azuma-Hoeffding's inequality, we get,
\begin{align}
    \sum_{t=t_e}^{t_{e+1}-1}\left(\mathbb{E}_{(s_t,a_t)\sim\pi_e, P}[r(s_t,a_t)\vert \mathcal{H}_{t_e-1}] - r(s_t,a_t)\right)&\le 2\sqrt{(t_{e+1}-t_e)\log(2T)}
\end{align}
with probability at least $1-1/T$. Summing over all the epochs and using Cauchy-Schwarz inequality, we get:
\begin{align}
    \sum_{t=t_e}^{t_{e+1}-1}\left(\lambda_{\pi_e}^{P,r} - r(s_t,a_t)\right) &= \sum_{e=1}^E\left(\sum_{t=t_e}^{t_{e+1}-1}\left(\mathbb{E}_{(s_t,a_t)\sim\pi_e, P}[r(s_t,a_t)\vert \mathcal{H}_{t_e-1}] - r(s_t,a_t)\right) + \frac{2CS}{1-\rho}\right)\\
    &\le \sum_{e=1}^E2\sqrt{(t_{e+1}-t_e)\log(2T)} + \frac{2CSE}{1-\rho}\\
    &\le 2\sqrt{E\sum_{e=1}^E(t_{e+1}-t_e)\log(2T)} + \frac{2CSE}{1-\rho}\\
    &= 2\sqrt{ET\log(2T)} + \frac{2CSE}{1-\rho}
\end{align}

with probability at least $1-E/T$. Further, the maximum value of the sum is bounded by $T$ and that event occurs with probability less than $1/T$ which gives,
\begin{align}
    \mathbb{E}\left[R_2(T)\right] &=  \sum_{e=1}^E\mathbb{E}\left[\sum_{t=t_e}^{t_{e+1}-1}\left(\lambda_{\pi_e}^{P,r} - r(s_t, a_t)\right)\right]\\
    &\le 4\sqrt{T\log(2T)} + \frac{2CSE}{1-\rho} + \frac{E}{T}T\\
    &= E + 4\sqrt{ET\log(2T)} + \frac{2CSE}{1-\rho}
\end{align}

We can now focus on the $R_1(T)$ term. We have:
\begin{align}
    R_1(T) &= \sum_{e=1}^T\mathbb{E}\left[\sum_{t=t_e}^{t_{e+1}-1}(\lambda_{\pi_e}^{P_e^r,r} - \lambda_{\pi_e}^{P,r})\right]\\
    &= \sum_{e=1}^T\mathbb{E}\left[\sum_{t=t_e}^{t_{e+1}-1}\mathbb{E}_{s,a\sim\pi,P}\left[B_{\pi_e}^{P_e^r,r}(s,a)\right]\right]
\end{align}

Similar to Equations \eqref{eq:expectation_to_conditional_expectation}-\eqref{eq:TV_bounded_by_l1}, we have:
\begin{align}
    \sum_{e=1}^E\sum_{t=t_e}^{t_{e+1}-1}\mathbb{E}_{s,a\sim\pi,P}\left[B_{\pi_e}^{P_e^r,r}(s,a)\right] &\le \sum_{e=1}^E\sum_{t=t_e}^{t_{e+1}-1}\mathbb{E}_{s,a\sim\pi,P}\left[B_{\pi_e}^{P_e^r,r}(s,a)|\mathcal{H}_{t_e-1}\right] + \sum_{e=1}^E\sum_{t=t_e}^{t_{e+1}-1}2CT_MS\rho^{t-t_e}\label{eq:total_bellman_sum}
\end{align}
Again, using Azuma-Hoeffding's inequality, with probability at least $1-1/T$ we have:
\begin{align}
    \sum_{t=t_e}^{t_{e+1}-1}\mathbb{E}_{s,a\sim\pi,P}\left[B_{\pi_e}^{P_e^r,r}(s,a)|\mathcal{H}_{t_e-1}\right] \le \sum_{t=t_e}^{t_{e+1}-1}B_{\pi_e}^{P_e^r,e}(s_t,a_t) + 2T_M\sqrt{(t_{e+1}-t_e)\log(2T)}
\end{align}
Summing over all the epochs, we get, with probability at least $1-E/T$:
\begin{align}
    \sum_{e=1}^E\sum_{t=t_e}^{t_{e+1}-1}\mathbb{E}_{s,a\sim\pi,P}\left[B_{\pi_e}^{P_e^r,r}(s,a)|\mathcal{H}_{t_e-1}\right] &\le \sum_{e=1}^E\sum_{t=t_e}^{t_{e+1}-1}B_{\pi_e}^{P_e^r,e}(s_t,a_t) + \sum_{e=1}^E\sum_{t=t_e}^{t_{e+1}-1}2T_M\sqrt{(t_{e+1}-t_e)\log(2T)}\\
    &\le \sum_{e=1}^E\sum_{t=t_e}^{t_{e+1}-1}B_{\pi_e}^{P_e^r,e}(s_t,a_t) + 2T_M\sqrt{E\sum_{e=1}^E(t_{e+1}-t_e)\log(2T)}\\
    &=  \sum_{e=1}^E\sum_{t=t_e}^{t_{e+1}-1}B_{\pi_e}^{P_e^r,e}(s_t,a_t) + 2T_M\sqrt{ET\log(2T)}\\
    &\le \sum_{e=1}^E\sum_{t=t_e}^{t_{e+1}-1}\Big\Vert \Tilde{P}(\cdot\vert s,a) - P(\cdot\vert s,a)\Big\Vert_1\Vert h_{\pi}^{\Tilde{P},r}(\cdot)\Vert_\infty + 2T_M\sqrt{ET\log(2T)}\label{eq:use_Lemma_bound_on_Bellman}\\
    &\le \sum_{e=1}^E\sum_{s,a}\nu_e(s,a)2\sqrt{\frac{14S\log(2AT)}{N_e(s,a)}}\Vert h_{\pi}^{\Tilde{P},r}(\cdot)\Vert_\infty + 2T_M\sqrt{ET\log(2T)}\label{eq:use_ell_1_bound}\\
    &\le 2T_M\sqrt{14S\log(2AT)}\sum_{s,a}\sum_{e=1}^E\frac{\nu_e(s,a)}{\sqrt{N_e(s,a)}} + 2T_M\sqrt{ET\log(2T)}\label{eq:use_lemma_bias_span_bound}\\
    &\le 2(\sqrt{2}+1)T_M\sqrt{14S\log(2AT)}\sum_{s,a}\sqrt{N(s,a)} + 2T_M\sqrt{ET\log(2T)}\label{eq:jaksch_sum_lemma}\\
    &\le 2(\sqrt{2}+1)T_M\sqrt{14S\log(2AT)}\sqrt{SAT} + 2T_M\sqrt{ET\log(2T)}\label{eq:use_cauchy_schwarz}
\end{align}
where Equation \eqref{eq:use_Lemma_bound_on_Bellman} follows from Lemma \ref{lem:bound_bellman_s_a_main}. Equation \eqref{eq:use_ell_1_bound} follows from Lemma \ref{lem:ell_1_conc} with probability $1-1/T^{5}$. Equation \eqref{eq:use_lemma_bias_span_bound} comes from Lemma \ref{lem:bounded_v_span_of_optimal_MDP}. Equation \eqref{eq:jaksch_sum_lemma} follows from \cite[{Lemma 19}]{jaksch2010near} and Equation \eqref{eq:use_cauchy_schwarz} follows from Cauchy-Schwarz inequality.

Together with Equation \eqref{eq:total_bellman_sum}, we get:
\begin{align}
    R_1(T) \le 2(\sqrt{2}+1)T_MS\sqrt{AT\log(AT)} + 2T_M\sqrt{ET\log(2T)} + \frac{2T_MSE}{1-\rho} + E + \sqrt{T}
\end{align}

Combining $R_1(T)$ and $R_2(T)$ we get the required bound on regret.
The bound on constraint violations follows similarly.
\end{proof}

\begin{proof}[Proof of Lemma \ref{lem:feasibility_of_sampled_MDP}]
We begin with considering the policy $\pi$ in Assumption \ref{ch_2_slaters_conditon}. We now prove the result for one $k\in[K]$ and the result follows for all $k\in[K]$. We consider an MDP with transition dynamics $P_e^k$ which maximizes $\zeta_{\pi}^{P',k}$ for all $\|P'(\cdot\vert s,a)-P(\cdot\vert s,a)\|_1\le \sqrt{\frac{14S\log(2At)}{N_e(s,a)}}$ for all $s,a$.
Consider the difference between the average cost $k$ incurred from following policy $\pi$ on the MDP with true transition probabilities $P$ and the average cost $k$ incurred from following policy $\pi$ on the MDP with transition probabilities $P_e^k$ and using Lemma \ref{lem:bound_average_by_bellman}. We have:
\begin{align}
    \zeta_{\pi}^{\tilde{P}_e,k} - \zeta_{\pi}^{P,k} &\le \zeta_{\pi}^{P_e^k,k} - \zeta_{\pi}^{P,k}\\
    &= \sum_{s,a}d_{\pi}^P(s,a)B_{\pi}^{P_e^k,k}(s,a)\\
    &= \mathbb{E}\left[B_{\pi}^{P_e^k,k}(s,a)\right]\label{eq:cost_difference_feasibility}
\end{align}
where the of Bellman error $B_{\pi}^{P_e^k,k}(s,a)$ is of the following form,
\begin{align*}
    B_{\pi}^{\tilde{P}_e,k}(s,a) = \lim_{\gamma\to1}\left(Q_{\gamma, \pi}^{\Tilde{P},k}(s,a) - c^k(s,a) -  \gamma\sum\nolimits_{s'\in\mathcal{S}}P(s'\vert s,a)V_{\gamma, \pi}^{\Tilde{P}, k}(s,a)\right),
\end{align*}
and the value function, $V_{\gamma, \pi}^{\Tilde{P},k}(s)$ and $Q$-value, $Q_{\gamma, \pi}^{\Tilde{P},k}(s,a)$, function become:
\begin{align*}
    V_{\gamma, \pi}^{\Tilde{P},k}(s) = \sum_{t=1}^\infty \gamma^{t-1}\mathbb{E}_{a_t\sim \pi, s_{t+1}\sim P}\left[c^k(s_t, a_t)\vert s_1 = s\right]\\
    Q_{\gamma, \pi}^{\Tilde{P},k}(s,a) = \sum_{t=1}^\infty \gamma^{t-1}\mathbb{E}_{a_t\sim \pi, s_{t+1}\sim P}\left[c^k(s_t, a_t)\vert s_1 = s, a_1 = a\right].
\end{align*}

We bound the expectation using Azuma-Hoeffdings inequality as follows:
\begin{align}
    \mathbb{E} \left[ B_{\pi}^{P_e^k,k}(s,a)\right] &= \mathbb{E} \left[ B_{\pi}^{P_e^k,k}(s_t,a_t)\vert \mathcal{H}_{t_e-1}\right] + C\Vert h_{\pi}^{P_e,k}(\cdot)\Vert_\infty\rho^{t-t_e}\\
    &= \frac{1}{t_{e+1}-t_e}\sum_{t=t_e}^{t_{e+1}-1}\left(\mathbb{E} \left[ B_{\pi}^{P_e^k,k}(s_t,a_t)\vert \mathcal{H}_{t_e-1}\right] + C\Vert h_{\pi}^{P_e,k}(\cdot)\Vert_\infty\rho^{t-t_e}\right)\label{eq:sum_both_sides}\\
    &\le \frac{1}{t_{e+1}-t_e}\sum_{t=t_e}^{t_{e+1}-1}\left(\mathbb{E} \left[ B_{\pi}^{P_e^k,k}(s_t,a_t)\vert \mathcal{H}_{t_e-1}\right]\right) + \frac{CS\Vert h_{\pi}^{P_e,k}(\cdot)\Vert_\infty}{(1-\rho)(t_{e+1}-t_e)}\label{eq:gp_sum_bellman}\\
    &\le \frac{1}{t_{e+1}-t_e}\left(T_M\sqrt{14S\log AT}\sum_{s,a}\frac{\nu_e(s,a)}{\sqrt{N_e(s,a)}} + 4T_M\sqrt{7(t_{e+1}-t_e)\log(t_{e+1}-t_e)}\right)\nonumber\\
    &~~+ \frac{CST_M}{(1-\rho)(t_{e+1}-t_e)}\label{eq:bellman_expectation}\\
    &\le \frac{1}{t_{e+1}-t_e}\left(T_M\sqrt{14S\log AT}\sum_{s,a}\sqrt{\nu_e(s,a)} + 4T_M\sqrt{7(t_{e+1}-t_e)\log(t_{e+1}-t_e)}\right)\nonumber\\
    &~~+ \frac{CST_M}{(1-\rho)(t_{e+1}-t_e)}\label{eq:lower_bound_N_e}\\
    &\le \frac{1}{t_{e+1}-t_e}\left(T_M S\sqrt{14A\log AT}\sqrt{\sum_{s,a}\nu_e(s,a)} + 4T_M\sqrt{7(t_{e+1}-t_e)\log(t_{e+1}-t_e)}\right)\nonumber\\
    &~~+ \frac{CST_M}{(1-\rho)(t_{e+1}-t_e)}\label{eq:cauchy_schwarz_1}\\
    &\le \frac{1}{t_{e+1}-t_e}\left(T_M S\sqrt{14A\log AT}\sqrt{(t_{e+1}-t_e)} + 4T_M\sqrt{7(t_{e+1}-t_e)\log(t_{e+1}-t_e)}\right)\nonumber\\
    &~~+ \frac{CST_M}{(1-\rho)(t_{e+1}-t_e)}\label{eq:sum_N_e_is_epoch_length}\\
    &\le \left(T_M S\sqrt{\frac{14A\log AT}{(t_{e+1}-t_e)}} + 4T_M\sqrt{\frac{7\log(t_{e+1}-t_e)}{(t_{e+1}-t_e)}}\right)+ \frac{CST_M}{(1-\rho)(t_{e+1}-t_e)}\label{eq:reduce_error_terms}
\end{align}
where Equation \eqref{eq:sum_both_sides} is obtained by summing both sides from $t = t_e$ to $t= t_{e+1}$. Equation \eqref{eq:gp_sum_bellman} is obtained by summing over the geometric series with ratio $\rho$. Equation \eqref{eq:bellman_expectation} comes from analysis used in the proof of Theorem \ref{thm:regret_bound}. Equation \eqref{eq:lower_bound_N_e} comes from the fact that $N_e(s,a) \ge \nu_e(s,a)$ for all $s,a$, and then replacing the lower bound of $N_e(s,a)$. Equation \eqref{eq:cauchy_schwarz_1} follows from the Cauchy Schwarz inequality. Equation \eqref{eq:sum_N_e_is_epoch_length} follows from the fact that the epoch length $t_{e+1}-t_e$ is same as the number of visitations to all state action pairs in an epoch. 

Combining Equation \eqref{eq:reduce_error_terms} with Equation \eqref{eq:cost_difference_feasibility}, we obtain the required result as follows:
\begin{align}
\zeta_{\pi}^{\tilde{P}_e,k} &\le \zeta_{\pi}^{P_e^k,k} - \zeta_{\pi}^{P,k} + \zeta_{\pi}^{P,k}\\
&\le \left(T_M S\sqrt{\frac{14A\log AT}{(t_{e+1}-t_e)}} + 4T_M\sqrt{\frac{7\log(t_{e+1}-t_e)}{(t_{e+1}-t_e)}}\right)+ \frac{CST_M}{(1-\rho)(t_{e+1}-t_e)} + \zeta_{\pi}^{P,k}\\
&\le \left(T_M S\sqrt{\frac{14A\log AT}{\sqrt{T}}} + 4T_M\sqrt{\frac{7\log(\sqrt{T})}{\sqrt{T}}}\right)+ \frac{CST_M}{(1-\rho)\sqrt{T}} + \zeta_{\pi}^{P,k}\label{eq:replace_eplen_with_fT}\\
&\le \left(T_M S\sqrt{\frac{14A\log AT}{\sqrt{T}}} + 4T_M\sqrt{\frac{7\log(\sqrt{T})}{\sqrt{T}}}\right)+ \frac{CST_M}{(1-\rho)\sqrt{T}} + C_k -\kappa\label{eq:use_assumption_slater}\\
&\le C_k\label{eq:use_slater_value}
\end{align}
where Equation \eqref{eq:replace_eplen_with_fT} comes from the fact that we consider epoch length $t_{e+1}-t_e\ge\sqrt{T}$ and Equation \eqref{eq:use_assumption_slater} comes from Assumption \ref{ch_2_slaters_conditon} and Equation \eqref{eq:use_slater_value} comes from the value of Slater's constant $\kappa$ in Assumption \ref{ch_2_slaters_conditon}. Replicating the analysis for all $k\in[K]$, for the policy $\pi$, $\zeta_{\pi}^{\tilde{P}_e,k}$ satisfy the constraint for all $k\in[K]$ and hence, the optimization problem in Equation \eqref{eq:optimization_equation}-\eqref{eq:cmdp_constraints} is feasible.
\end{proof}
	
\end{document}